\providecommand{\tabularnewline}{\\}
\providecommand{\algorithmname}{Algorithm}
\theoremstyle{plain}
\newtheorem{thm}{\protect\theoremname}
\theoremstyle{plain}
\newtheorem{lem}[thm]{\protect\lemmaname}
\newenvironment{proof}[1][\protect\proofname]{\par
	\normalfont\topsep6\p@\@plus6\p@\relax
	\trivlist
	\itemindent\parindent
	\item[\hskip\labelsep\scshape #1]\ignorespaces
}{%
	\endtrivlist\@endpefalse
}
\providecommand{\proofname}{Proof}
\author{\name Aaron Defazio \\
       \addr Facebook AI Research, New York \\
       \AND
       \name Samy Jelassi \\
       \addr Princeton University, Princeton}
\providecommand{\lemmaname}{Lemma}
\providecommand{\theoremname}{Theorem}
\begin{document}
\title{Adaptivity without Compromise: A Momentumized, Adaptive, Dual Averaged
Gradient Method for Stochastic Optimization}
\maketitle
\begin{abstract}
We introduce MADGRAD, a novel optimization method in
the family of AdaGrad adaptive gradient methods. MADGRAD shows excellent
performance on deep learning optimization problems from multiple fields,
including classification and image-to-image tasks in vision, and recurrent
and bidirectionally-masked models in natural language processing.
For each of these tasks, MADGRAD matches or outperforms both SGD and
ADAM in test set performance, even on problems for which adaptive
methods normally perform poorly.
\end{abstract}

\section{Introduction}

Optimization for deep learning forms a relatively new and growing
sub-field in the optimization community. Compared to classical first
order optimization, deep learning problems introduce additional concerns
which require new tools to overcome. Deep learning problems are characterized
by very large parameter vector sizes $D$, making it computationally
infeasible to store matrices of size $D\times D$, and even ``limited
memory'' approaches can be impractical for problems such as the 100+
billion parameter models currently being explored \citep{rajbhandari2019zero,brown2020language}.
The practical limit on these problems is storage that is fixed at
a small multiple of the parameter vector size.

For this reason, diagonal scaling approaches have become the industry
standard for deep learning. In this class of methods, adaptivity is
performed independently for each coordinate, so that memory usage
scales as $O(D)$. We consider Adam \citep{kingma2014adam} the benchmark
method in this class; it has seen widespread adoption, and there are
no alternative adaptive methods that consistently out-perform it \citep{choi2020empirical,schmidt2020descending}.

Adam builds upon a rich history of diagonal adaptive methods. The
AdaGrad method \citep{duchi2011adaptive} introduced a principled
approach to diagonal adaptivity, that arises naturally as a simplification
of a full-matrix adaptivity scheme. This approach is clearly motivated
and yields natural convergence rate bounds for convex losses. Also
within this family, the RMSProp method \citep{rmsprop2012} arose
as a well-performing empirical method in this class, albeit with little
theoretical motivation. The development of the Adam method can be
seen as a natural extension of the scaling used in RMSProp to include
a form of momentum, as well as a stabilizing ``bias-correction''
that significantly dampens the adaptivity and step-size during the early stages
of optimization.

Despite its widespread success, Adam is far from a panacea for deep
learning optimization. \citet{marginal_adaptive2017} show that Adam
as well as other common adaptive optimizers converge to bad local
minima on some important problems, such as the widely studied problem
of image classification. This has led to the general claim that adaptive
methods generalize poorly. As we will show, this is not necessarily
the case. The method we develop in this work combines adaptivity with
strong generalization performance.

Our MADGRAD (Momentumized, Adaptive, Dual averaged GRADient)
method performs consistently at a state-of-the-art level across a
varied set of realistic large-scale deep learning problems, without
requiring any more tuning than Adam. MADGRAD is constructed from the
lesser-used dual averaging form of AdaGrad, through a series of direct
and systematic changes that adapt the method to deep learning optimization.

\section{Problem Setup}

We consider the stochastic optimization framework, where the goal
is to minimize a parameterized function 
\[
f(x)=\mathbb{E}_{\xi}\left[f(x,\xi)\right],
\]
 where $x\in\mathbb{R}^{D}$, and each $\xi$ is a random variable
drawn from a fixed known distribution. In the case of empirical risk
minimization, $\xi$ is a data-point drawn from the data distribution,
typically further processed by a stochastic data-augmentation procedure.
At each step $k$, a stochastic optimization algorithm is given $\xi_{k}$
and has access to $f(x_{k},\xi_{k})$ and $\nabla f(x_{k},\xi_{k})$
for a pre-specified iterate $x_{k}$.

\section{Related Work}

The theory of adaptive methods for non-convex optimization is still
in its infancy. The current best known convergence theory for Adam
due to \citet{defossez2020simple} greatly improves over earlier theory
\citep{zou2019a}, but has the important caveat that it requires momentum
values of the order $\beta=1-1/N$ for $N$ iterations, which is far
from the values used in practice, which are of the order $\beta=0.9$
to $\beta=0.99$. Results for these settings may not be possible,
as \citet{reddibeyondadam2018} show via a counter-example that Adam
may fail to converge under common parameter settings, even in the
convex case. When $\beta_1$ \& $\beta_2$ are small, the Adam update is close to 
sign-sgd (i.e. $x_{k+1}=x_{k}-\gamma\text{sign}(\nabla f(x_{k},\xi_{k})$),
a method that also fails to converge in the general stochastic case
\citep{dissecting2018}, although some theory is possible under a
large batch assumption \citet{pmlr-v80-bernstein18a} where the behavior
is closer to the non-stochastic case.

AdaGrad's convergence in the non-convex case has also been studied.
\citet{pmlr-v97-ward19a} establish convergence for a restricted variant
where only a global step size is adaptively updated. \citet{pmlr-v89-li19c} establish almost sure convergence for a variant of AdaGrad where the most recently seen gradient is omitted from the denominator. Convergence with high probability is also established for a variant with global rather than coordinate-wise step size. More recently
\citet{zhou2020adaptive} and \citet{zou2019weighted} establish convergence
of non-momentum and momentum variants respectively, although with
bounds that are much worse than established by \citet{defossez2020simple},
who also cover AdaGrad in their analysis.

Weighted AdaGrad as we use in this work has been explored to varying degrees before, including
the non-convex case in the aforementioned work by \citet{zou2019weighted},
and the convex case by \citet{accel_adagrad2018}. Weighting is particularly
interesting in the strongly convex case, where weights such as $\lambda_{k}\propto k^{2}$
can be used to achieve accelerated convergence. Neither of these works
cover the dual averaged form of AdaGrad which we explore.


\section{Adaptivity in deep learning beyond Adam}
To understand the motivation and design of the MADGRAD method, a clear understanding of the short-comings of existing methods is needed. Consider Adam, the most heavily used adaptive method in practice. Although it works remarkably well on some important problems, it also suffers from the following issues:
\begin{itemize}
    \item It greatly under-performs the non-adaptive SGD-M method in a number of important situations including the widely studied ImageNet training problem.
    \item Problems can be constructed on which it will fail to converge entirely, even in the convex setting.
    \item The exponential moving average updates used are non-sparse when given sparse gradients, which makes the method poorly suited to sparse problems.  
\end{itemize}
Due to these issues, Adam doesn't quite reach the goal of being a general-purpose deep learning optimizer. The MADGRAD method is directly designed to address these issues. MADGRAD:
\begin{itemize}
    \item Achieves state-of-the-art performance across problems traditionally tackled by Adam, while simultaneously achieving state-of-the-art on problems where Adam normally under-performs.
    \item Has provable and strong convergence theory on convex problems.
    \item Is directly applicable to sparse problems when momentum is not used.
\end{itemize}

\section{Design}


The MADGRAD method is the combination of a number of techniques that
individually address separate short-comings in the AdaGrad method
when applied to deep learning optimization problems. By building upon a method with known convergence theory, we are able to construct a method that is still provably convergent (under convexity assumptions) without sacrificing the practical performance characteristics of Adam. We will detail each of these techniques in turn, to build up MADGRAD from its foundations.

\subsection{Dual averaging for deep learning}


MADGRAD is based upon the dual averaging formulation of AdaGrad, rather
than the mirror descent formulation. Although the original seminal
work on AdaGrad \citep{duchi2011adaptive} presents the dual averaging
formulation with equal weight as the mirror descent form, the dual
averaging form has seen virtually no use for deep learning optimization.
The AdaGrad implementations available in major deep learning frameworks
(PyTorch, Tensorflow) contain the mirror descent form only. This is
despite the theory presented for the dual averaging formulation being
arguably more elegant than the mirror descent theory. The dual averaging form of AdaGrad satisfies the following bound:
\[
\sum_{i=1}^{k}f(x_{i})-f(x_{*})\leq\frac{1}{\gamma}\psi_{k}(x_{*})+\frac{\gamma}{2}\sum_{i=1}^{k}\left\Vert \nabla f_{i}(x_{i})\right\Vert _{\psi^{*}_{i-1}}^{2}
\]
Whereas the mirror descent form satisfies the following more complex bound, involving the Bregman divergence of $\psi$:
\begin{gather*}
\sum_{i=1}^{k}f(x_{i})-f(x_{*})\leq\\
\frac{1}{\gamma}B_{\psi_{1}}(x_{*},x_{1})+\frac{1}{\gamma}\sum_{i=1}^{k-1}\left[B_{\psi_{i+1}}(x_{*},x_{i+1})-B_{\psi_{i}}(x_{*},x_{i+1})\right]+\frac{\gamma}{2}\sum_{i=1}^{k}\left\Vert \nabla f_{i}(x_{i})\right\Vert _{\psi_{i}^{*}}^{2}.
\end{gather*}
Given the clear advantage in terms of theoretical simplicity, why then are dual averaging approaches not used more widely? We believe this is due to a number of misconceptions. The first misconception
is that dual averaging is only interesting in the composite optimization
setting, where sophisticated regularizers are used to encourage sparsity
or induce other properties of the solution. It is true that
for smooth non-stochastic optimization, gradient descent and
mirror descent coincide (under optimal hyper-parameters). However, when the
objective is stochastic or non-smooth, the methods become distinct, and actually behave quite differently.

Dual averaging has the general form, given a proximal function $\psi$:
\begin{align}
g_{k} & =\nabla f\left(x_{k},\xi_{k}\right),\nonumber \\
s_{k+1} & =s_{k}+\lambda_{k}g_{k},\nonumber \\
x_{k+1} & =\arg\min_{x}\left\{ \left\langle s_{k+1},x\right\rangle +\beta_{k+1}\psi(x)\right\} .\label{eq:vanilla_da}
\end{align}
The gradient buffer $s_0$ is initialized as the zero vector. The simplest form of dual averaging occurs when the standard Euclidean
squared norm is used: $\psi(x)=\frac{1}{2}\left\Vert x-x_{0}\right\Vert ^{2}$, and $\lambda_{k}=1$
in which case the method takes the form:
\begin{equation}
x_{k+1}=x_{0}-\frac{1}{\beta_{k+1}}\sum_{i=0}^{k}g_{i}\label{eq:da_simple}.
\end{equation}
If the objective is either non-smooth or stochastic (or
both), $\beta$ sequences of the form $\beta_{k+1}=\sqrt{k+1}$ give
a convergent method. Although Equation \ref{eq:da_simple} has little
resemblance to SGD as written, SGD's update:
\[
x_{k+1}=x_{k}-\gamma_{k}\nabla f\left(x_{k},\xi_{k}\right),
\]
can be written in the more comparable form:
\begin{equation}
x_{k+1}=x_{0}-\sum_{i=0}^{k}\gamma_{i}g_{i}.\label{eq:sgd_sum_form}
\end{equation}
where to achieve convergence without a fixed stopping time, a step
size of the form $\gamma_{i}\propto1/\sqrt{i+1}$ is standard. Comparing
SGD and DA at a step $k$, it's clear that the weighting sequence
used by SGD places a smaller weight on newer $g_i$ in the summation compared to earlier $g_i$,
whereas the sequence used by DA places equal weight on all $g_i$.
This difference is key to understanding why methods in the DA family behaves differently
from SGD in practice, even without additional regularization or non-Euclidean
proximal functions.

The second misconception arises from implementing the dual averaging
form of AdaGrad without considering what modifications need to be
made for the deep learning setting. The algorithm as originally stated,
uses an initial point of the origin $x_{0}=0$, and a proximity
function $\psi_{t}(x)=\frac{1}{2}\left\langle x,H_{t}x\right\rangle $
that is quadratic, but centered around the origin. It is well known
that neural network training exhibits pathological behavior when initialized
at the origin, and so naive use of this algorithm does not perform
well. When centering around 0, we have observed severely degraded empirical performance and a high risk of divergence. Instead, a proximity function centered about $x_{0}$ needs
to be used:
\[
\psi_{t}(x)=\frac{1}{2}\left\langle x-x_{0},H_{t}\left(x-x_{0}\right)\right\rangle,
\]
with initialization of $x_{0}$ following standard conventions for the
network being trained.

\subsection{Dual averaging generalizes well}
In addition to the theoretical advantages of dual averaging methods, we have also observed that they also enjoy a strong practical advantage in the form of better generalization performance.
Dual averaging based methods include a form of implicit regularization,
which we believe is a crucial factor contributing to their good generalization
performance. To see this, consider the classical
dual averaging update:
\[
x_{k+1}=x_{0}-\frac{1}{\sqrt{k+1}}\sum_{i=0}^{k}g_{i},
\]
This update can be written in a form closer to the SGD update by substituting
for $x_{0}$:
\begin{align*}
x_{k+1} & =\left(x_{k}+\frac{1}{\sqrt{k}}\sum_{i=0}^{k-1}g_{i}\right)-\frac{1}{\sqrt{k+1}}\sum_{i=0}^{k}g_{i},\\
 & =x_{k}-\frac{1}{\sqrt{k+1}}\left[g_{k}-\left(\frac{\sqrt{k+1}}{\sqrt{k}}-1\right)\sum_{i=0}^{k-1}g_{i}\right],\\
 & = x_{k}-\frac{1}{\sqrt{k+1}}\left[g_{k}+\left(\sqrt{k+1}-\sqrt{k}\right)\left(x_{k}-x_{0}\right)\right].
\end{align*}
Since $\sqrt{k+1}-\sqrt{k}\approx1/(2\sqrt{k+1})$, the behavior of dual
averaging resembles a SGD step with a step-dependent regularizer:
\[
\frac{1}{4\sqrt{k}}\left\Vert x_{k}-x_{0}\right\Vert ^{2},
\]
which decays in strength during the course of optimization. We speculate that the indirect
decaying regularization inherent in dual averaging methods may explain
why MADGRAD also requires less decay than other methods to match their
performance. The strong initial regularization may have a positive effect during early iterations, while not negatively affecting the ability of the model to fit to the data during the later "fine-tuning" epochs. Given the practical advantages we observe in our experiments, we believe further research into the effect of using stronger regularization at the early stages of optimization may be interesting more generally.

\subsection{$\lambda$ sequences for deep learning}


\label{subsec:lamb-seq}
\begin{figure}
\includegraphics[width=0.49\textwidth]{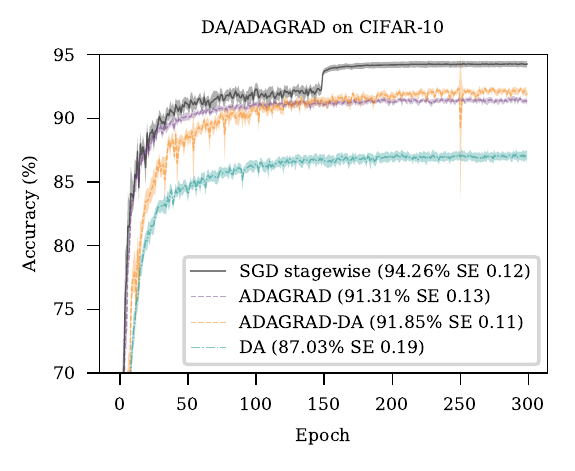}\includegraphics[width=0.49\textwidth]{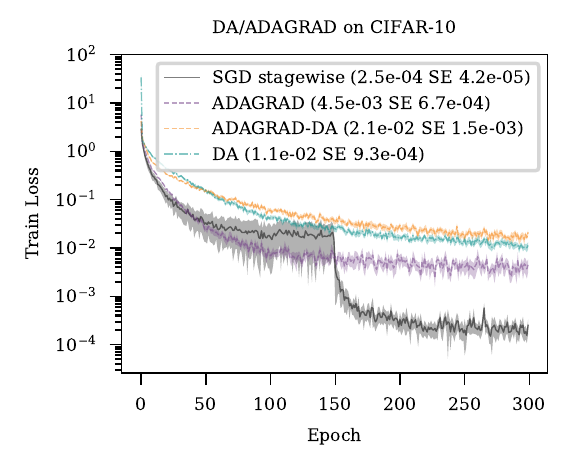}

\includegraphics[width=0.49\textwidth]{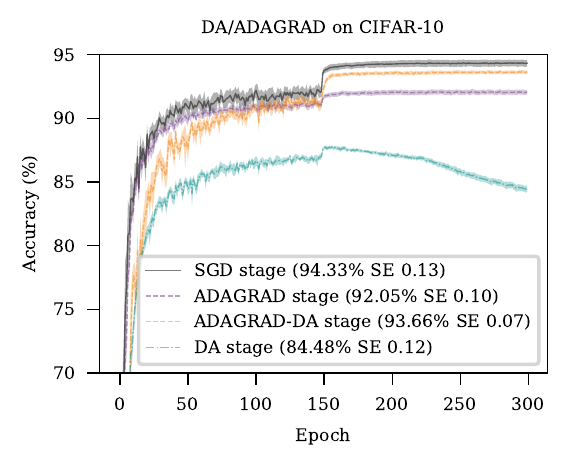}\includegraphics[width=0.49\textwidth]{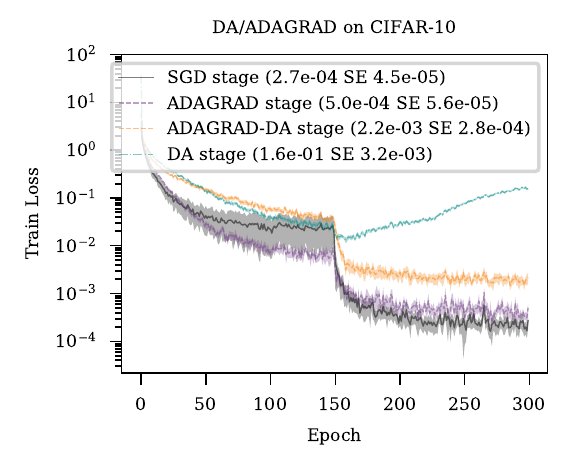}

\caption{\label{fig:da-adagrad}Comparison of SGD without momentum to DA and
DA-AdaGrad and AdaGrad on CIFAR-10. Left column is test classification performance, right column is training loss. The "stage" learning rate scheme involves a 10 fold decrease in the learning rate at epochs 150 and 225. See Section \ref{sec:experiments}
for a full description of the experimental setup.}
\end{figure}
Even with this modification, dual averaging both with and without
adaptivity is not competitive with SGD on standard benchmark problems
such as CIFAR10, as shown in Figure \ref{fig:da-adagrad}.  The top
row shows AdaGrad and DA methods using a flat learning rate schedule,
and the bottom row shows a stage-wise schedule. SGD is shown as a
baseline. For the DA family methods, $\lambda_k$ is decreased for the
stage-wise schedules. Both AdaGrad, DA and AdaGrad-DA under-perform
SGD with either learning rate schedule. Part of this performance gap can be attributed to the fact that each of these methods either implicitly or explicitly use a $1/\sqrt{i+1}$ learning rate sequence.
This sequence is actually \textbf{harmful}, as we can confirm
by testing SGD using a schedule of the form: 
\[
\gamma_{i}=\frac{a}{\sqrt{i+b}},
\]
 Figure \ref{fig:lrsched} illustrates the learning curves achievable
for varying $b$ values on CIFAR-10. Full description of our experimental
setup is in Section \ref{sec:experiments}. We performed a hyper-parameter
search over $a$ separately for each $b$, with test accuracy as the target quantity. All sqrt-decay sequences
are significantly worse than the baseline stage-wise schedule, where
the learning rate is decreased 10 fold at epochs 150 and 225. We speculate that the
sqrt-decay sequences result in convergence that is too rapid, skipping
over the initial annealing stage of learning, resulting in convergence
to a poor local minima.
\begin{figure}
\includegraphics[width=0.49\textwidth]{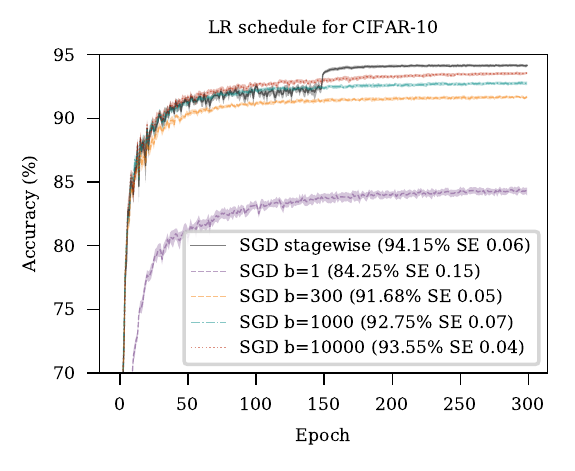}\includegraphics[width=0.49\textwidth]{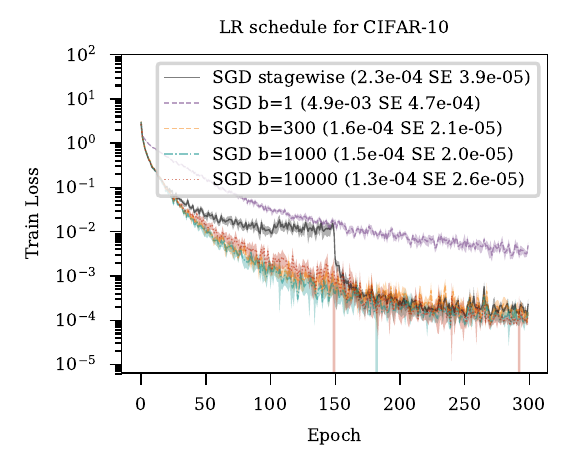}

\caption{\label{fig:lrsched}Sqrt-decay learning rate schedules under-perform
stage-wise schedules. With batch-size 128 on CIFAR-10. No momentum
is used in this comparison. A range of offsets $b$ in the rate $a/\sqrt{i+b}$
were tried with values up to 10,000 shown. Larger values of $b$ up
to 100,000 were also tested, they also failed to match the performance
of the stage-wise schedule. Left column is test classification performance, right column is training loss.}
\end{figure}
The AdaGrad and AdaGrad-DA methods also use an implicitly decreasing
sequence, although the rate of decrease depends on the magnitude of
the gradients, which is very problem dependent. If gradients stay
of similar magnitude over a particular time-scale, then the rate of
decrease will also be a $1/\sqrt{k}$ rate for step $k$. This step size scheme is also undesirable as prevents the
use of standard SGD \& Adam step size sequences for choosing the explicit
step size constants $\lambda_{i}$. 

Since in practice the same learning rate scheme is commonly used when
comparing different optimization methods, this schedule contributes to
the commonly held perception that AdaGrad is not as effective as other
adaptive methods such as Adam. 

For the DA method, we propose to remedy this issue by introducing
a scaling of the $\lambda$ values to counter-act the step size sequence.
In particular we propose the choice:
\[
\lambda_{i}=\left(i+1\right)^{1/2}\gamma_{i},
\]
where $\gamma$ is a conventional (SGD/Adam) step size sequence. The
advantage of this choice is that the leading term in the sum in Equation
\ref{eq:da_simple} has constant weight across $k$:
\begin{align*}
x_{k+1} & =x_{0}-\frac{1}{\sqrt{k+1}}\sum_{i=0}^{k}\lambda_{i}g_{i},\\
 & =x_{0}-\gamma_{k}g_{k}-\frac{1}{\sqrt{k+1}}\sum_{i=0}^{k-1}\lambda_{i}g_{i},
\end{align*}
mirroring the behavior of SGD during a constant step size phase, but
retaining the $\sqrt{k+1}$ decay of past gradients. This simple change
is sufficient to greatly improve the test-set performance of DA when
using the same learning rate schedule as SGD.

Another advantage of this sequence is that it will place higher weights
on latter gradients in the final convergence rate bound. This makes
no difference if we expect gradients to be of similar magnitude at
all stages of optimization (which can happen for non-smooth problems
in the worse case), but in practice even for non-smooth objectives
the gradient typically shrinks to some degree during optimization,
leading to tighter bounds when using a forward weighted lambda sequence.
We discuss this difference further in Section \ref{sec:theory}.

\subsection{Momentum}


The use of momentum on top of SGD is known to be highly beneficial, if not crucial,
for deep learning optimization across a wide variety of architectures
and problem settings \citep{sutskever2013importance}. Given how crucial
it can be to maintaining competitive performance, we now examine how
we can add a form of momentum to the dual averaging updates, and latter
the AdaGrad updates.


We will consider an update of the following form, which was first
explored in this general form by \citet{nesterov2015quasi} under
the name Dual Averaging with Double Averaging:
\begin{align}
g_{k} & =\nabla f\left(x_{k},\xi_{k}\right),\nonumber \\
s_{k+1} & =s_{k}+\lambda_{k}g_{k},\nonumber \\
z_{k+1} & =\arg\min_{x}\left\{ \left\langle s_{k+1},x\right\rangle +\beta_{k+1}\psi(x)\right\} ,\label{eq:da_mom}\\
x_{k+1} & =\left(1-c_{k+1}\right)x_{k}+c_{k+1}z_{k+1}.\nonumber 
\end{align}
The essential idea behind this algorithm is simple. Instead of evaluating
the gradient at each step at the value of the argmin operation as
with regular DA, instead it's evaluated at a moving average point
instead. This serves to smooth the iterate sequence. This technique
has the advantage in the convex setting of making it possible to prove
convergence properties of the last iterate $x_{k+1}$ rather than
the average iterate $\bar{x}_{k+1}=\frac{1}{k+1}\sum_{i=0}^{k}x_{i}$.
Essentially the averaging operation is incorporated into the algorithm
itself. 

Momentum is normally thought of as performing more than just a smoothing
of the iterate sequence, although a line of recent research has shown
that inline averaging of the above form is actually exactly equivalent
to momentum \citep{sebbouh2020convergence,defazio2020understanding}.
This is clearly illustrated when momentum is added on top of SGD,
where inline averaging:
\begin{align*}
z_{k+1} & =z_{k}-\eta_{k}\nabla f(x_{k},\xi_{k}),\\
x_{k+1} & =\left(1-c_{k+1}\right)x_{k}+c_{k+1}z_{k+1},
\end{align*}
is actually exactly equivalent to more common equational forms of
momentum:
\begin{align*}
m_{k+1} & =\beta_{k}m_{k}+\nabla f(x_{k},\xi_{k}),\\
x_{k+1} & =x_{k}-\alpha_{k}m_{k+1},
\end{align*}
for appropriate choices of the hyper-parameters. In the convex setting
the advantage of this form arises when $c_{k+1}=\frac{1}{k+1}$, which
corresponds to an equal weighted moving average $x_{k+1}=\frac{1}{k+1}\sum_{i=0}^{k}z_{i}$.
Under this setting convergence of the last iterate can be shown just
as when this kind of averaging is used with dual averaging \citep{defazio2020power}.
In the non-convex setting, constant $c_{k+1}$ values, which correspond
to an exponential moving average, appear to be the best choice \citep{defazio2020understanding}.

\subsection{Adaptivity}
Our goal is to combine these ideas together with the adaptivity technique
from the AdaGrad method. The dual averaging form of coordinate-wise
AdaGrad has the following form:
\[
x_{k+1}=x_{0}-\frac{1}{\sqrt{\sum_{i=0}^{k}\gamma_{i}g_{i}^{2}}}\circ\sum_{i=0}^{k}\gamma_{i}g_{i},
\]
where $\circ$ represents the element-wise (Hadamard) product, and
$\gamma$ is a fixed step size hyper-parameter. There are many different
ways of combining this kind of coordinate-wise adaptivity with the weighted
gradient sequence $\lambda_{i}=\sqrt{i+1}$ that we have proposed.
Due to the flexibility of the dual averaging framework, it's possible
to prove a convergence rate of some form for practically any choice
of denominator sequence. However, we must take into consideration
that we also want to maintain the magnitude of the ``effective''
step size, as discussed in Section \ref{subsec:lamb-seq}. 

We also need to ensure that the weighted dominator includes $\gamma_{i}$
not just $\sqrt{i+1}$, as this mitigates a problem illustrated for
DA in Figure \ref{fig:da-adagrad}: when $\lambda$ is decreased 10
fold at epoch 150, the method starts to diverge. At this point, the
$\beta$ sequence continues to decrease at a square-root rate, while
the sum-of-gradients starts growing ten times slower. This results
in the method shrinking the iterates towards $x_{0}$ far to strongly.

We review a number of possible alternatives below and discuss their practicality.

\subsubsection{Unweighted denominator }
One possibility is keep the denominator the same but just weight the
gradients in the sum:
\[
x_{k+1}=x_{0}-\frac{1}{\sqrt{\sum_{i=0}^{k}\gamma_{i}g_{i}^{2}}}\circ\sum_{i=0}^{k}\left(i+1\right)^{1/2}\gamma_{i}g_{i},
\]

This is appealing as it maintains the constant effective step size
property, however the resulting convergence rate bound derivable from
this form depends on $\sqrt{\sum_{i=0}^{k}\gamma_{i}g_{i}^{2}}$ rather
than $\sqrt{\sum_{i=0}^{k}\left(i+1\right)^{1/2}\gamma_{i}g_{i}^{2}}$,
which defeats the purpose of using a front-weighted gradient sequence.

\subsubsection{Weighted denominator}
We can weight the gradient sequence in the denominator by $\lambda$
also:
\[
x_{k+1}=x_{0}-\frac{1}{\sqrt{\sum_{i=0}^{k}\left(i+1\right)^{1/2}\gamma_{i}g_{i}^{2}}}\circ\sum_{i=0}^{k}\left(i+1\right)^{1/2}\gamma_{i}g_{i}.
\]
This form does not maintain a constant effective step size, which
results in poor empirical performance. We experimented with mitigations
such as adding additional terms to the numerator that would counteract
this growth, however this still resulted in unsatisfactory empirical
results. 

\subsubsection{Weighted numerator}
The AdaGrad variant proposed by \citet{zou2019weighted} uses a weighting
scheme where the weights $\lambda_{k}$ are included in the numerator
as well as the denominator:
\[
x_{k+1}=x_{0}-\frac{\gamma_{i}}{\sqrt{t}}\frac{\sqrt{\sum_{i=0}^{k}\lambda_{i}}}{\sqrt{\sum_{i=0}^{k}\lambda_{i}g_{i}^{2}}}\circ g_{i}=x_{0}-\frac{\gamma_{i}}{\sqrt{t}}\frac{\sqrt{\sum_{i=0}^{k}\left(i+1\right)^{1/2}}}{\sqrt{\sum_{i=0}^{k}\left(i+1\right)^{1/2}g_{i}^{2}}}\circ g_{i}.
\]
This numerator is proportional to $t^{1/4}$. To adapt this sequence
to dual averaging, we must include a step size parameter in the weights.
It's unclear exactly how to do this in a way that maintains the effective
step size property, since if $\lambda_{i}\propto\gamma_{i}$ then
the step size will cancel between the numerator and denominator. 

\subsubsection{MADGRAD's Cube-root denominator}

To maintain the correct effective step size we propose the use of a cube root
instead:
\begin{equation}
x_{k+1}=x_{0}-\frac{1}{\sqrt[3]{\sum_{i=0}^{k}\left(i+1\right)^{1/2}\gamma_{i}g_{i}^{2}}}\circ\sum_{i=0}^{k}\left(i+1\right)^{1/2}\gamma_{i}g_{i}.\label{eq:madgrad_cube}
\end{equation}
Although this modification appears ad-hoc, the use of a cube root
here can actually be motivated by a similar argument used to motivate
the standard square-root formulation. \citet{duchi2011adaptive} consider
the following minimization problem over a $D$ dimensional vector
$s$:
\[
\min_{s}\sum_{i=0}^{k}\sum_{d=0}^{D}\frac{g_{id}^{2}}{s_{d}},\;\left\langle 1,s\right\rangle \leq c, \;\forall d:\,s_{d}>0,
\]
which is solved by $s_{d}\propto\sqrt{\sum_{i=0}^{k}g_{id}^{2}}$.
The motivation for this surrogate problem is to minimize weighted
square norm of the gradients in hind-sight. Rather than a linear penalty
on the size of $s$, which when combined with the positivity constraint
is just a L1 norm penalty $\left\Vert s\right\Vert _{1}\leq c$, if
we instead use a L2 norm penalty:
\[
\min_{s}\sum_{i=0}^{k}\sum_{d=0}^{D}\frac{g_{id}^{2}}{s_{d}},\;\left\Vert s\right\Vert _{2}^{2}\leq c,\;\forall d:\,s_{d}>0
\]
then we recover a cube-root solution $s_{d}\propto\sqrt[3]{\sum_{i=0}^{k}g_{id}^{2}}$. We show this in the Appendix.
The cube root maintains the effective step size as can be seem by
considering that $\sum_{i}^{k}\left(i+1\right)^{1/2}\propto\left(k+1\right)^{3/2}$
which after the cube root operation gives the necessary $\sqrt{k+1}$
scaled denominator required to cancel against $\lambda$'s square-root
growth.

One disadvantage of this weighting is that it results in a final convergence
rate bound that is not fully adaptive in the sense that the choices
of global step size will depend on an expression involving the gradient
norms. We don't believe this is a significant problem given that the
choice of step size still depends on other unknown quantities even
when using a fully adaptive sequence such as the function sub-optimality
gap and gradient bound $G$.

\section{Convergence Theory}
\begin{algorithm}[t]
\begin{algorithmic}[1]
\Require{$\gamma_k$ stepsize sequence, $c_k$ momentum sequence, initial point $x_0$, epsilon $\epsilon$}
\State{$s_0 : d = 0$, $\nu_0 : d = 0$}
\For{$k=0,\dots,T$}
\State{Sample $\xi_k$ and set $g_k=\nabla f(x_k, \xi_k)$}
\State{$\lambda_{k} = \gamma_k \sqrt{k+1}$}
\State{$s_{k+1} = s_k + \lambda_k g_k $}
\State{$\nu_{k+1} = \nu_k + \lambda_k \left( g_{k} \circ g_k \right)$}
\State{$$z_{k+1} = x_0 - \frac{1}{\sqrt[3]{\nu_{k+1}}+\epsilon} \circ s_{k+1}$$}
\State{$x_{k+1}=\left(1-c_{k+1}\right)x_{k}+c_{k+1}z_{k+1}.$}
\EndFor
\State\Return $x_T$
\end{algorithmic}
\caption{\label{alg:MADGRAD}MADGRAD}
\end{algorithm}
\label{sec:theory} The MADGRAD algorithm, combining the discussed ideas, is listed in
Algorithm \ref{alg:MADGRAD}. In order to establish convergence results for
potentially non-smooth functions, we rely on a bounded gradient assumption:
\[
\left\Vert \nabla f(x,\xi)\right\Vert _{\infty}\leq G\;\text{for all }x,\xi.
\]
We also assume each $f(\cdot,\cdot)$ is proper and convex in $x$ over all
$\mathbb{R}^{D}$. Our analysis uses a slight variant of Algorithm
\ref{alg:MADGRAD}, where the denominator includes an extra term $\lambda_{k}G^{2}$:
\begin{equation}
z_{k+1}=x_{0}-\frac{1}{\sqrt[3]{\lambda_{k+1}G^{2}+v_{k+1}}} \circ s_{k+1},\label{eq:madgrad_mod}
\end{equation}
A similar term is also needed by the original DA-AdaGrad method in
\citet{duchi2011adaptive}, and appears necessary for bounding the
accumulated error. We don't believe this term plays an important role
in practice as its magnitude quickly diminishes, and so we have not
included this term in Algorithm \ref{alg:MADGRAD}. A per-coordinate
upper bound $G_{d}$ may be used instead of $G$ to further tighten the theory.
\begin{thm}
After $k$ steps of MADGRAD using the update in Equation~\ref{eq:madgrad_mod},
\begin{align*}
\mathbb{E}\left[f(x_{k})-f(x_{*})\right] & \leq\frac{6}{k^{1/2}}\left\Vert x_{0}-x_{*}\right\Vert _{2}GD^{1/2},
\end{align*}
if $c_{k}=\frac{3/2}{k+3/2}$ and 
\[
\gamma=\frac{1}{k^{3/4}D^{3/4}G^{1/2}}\left\Vert x_{0}-x_{*}\right\Vert _{2}^{3/2}.
\]
\end{thm}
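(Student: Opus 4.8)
The plan is to reduce the theorem to three pieces: (i) a convexity argument that turns the momentum/averaging recursion into a telescoping sum of optimality gaps $\delta_i = f(x_i)-f(x_*)$, (ii) the classical dual-averaging regret bound specialized to the coordinate-wise cube-root preconditioner, and (iii) a weighted AdaGrad-style telescoping lemma to control the adaptive error term. Throughout I would work with the pre-gradient dual iterate $z_i = \arg\min_x\{\langle s_i, x\rangle + \psi_i(x)\}$, where $\psi_i(x) = \tfrac12\langle x - x_0, H_i(x-x_0)\rangle$ and $H_i = \mathrm{diag}(\sqrt[3]{\lambda_i G^2 + \nu_i})$, so that $z_i$ is measurable with respect to the history before $g_i$ and $\mathbb{E}[\langle g_i, z_i - x_*\rangle \mid \mathcal{F}_{i-1}] = \langle \nabla f(x_i), z_i - x_*\rangle$; this is what lets the stochasticity be handled by a single conditioning step at the end.

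First I would handle the averaging. Rewriting $x_i = (1-c_i)x_{i-1}+c_i z_i$ as $x_i - z_i = \tfrac{1-c_i}{c_i}(x_{i-1}-x_i)$ and applying convexity twice (namely $\delta_i \le \langle \nabla f(x_i), x_i - x_*\rangle$ and $\langle \nabla f(x_i), x_{i-1}-x_i\rangle \le \delta_{i-1}-\delta_i$) yields the per-step inequality $\tfrac{1}{c_i}\delta_i - \tfrac{1-c_i}{c_i}\delta_{i-1} \le \langle \nabla f(x_i), z_i - x_*\rangle$. Multiplying by $\lambda_i$ and summing, the left side telescopes provided each interior coefficient $\tfrac{\lambda_i}{c_i} - \tfrac{\lambda_{i+1}(1-c_{i+1})}{c_{i+1}}$ is nonnegative; this is exactly where the choices $c_i = \tfrac{3/2}{i+3/2}$ and $\lambda_i = \gamma\sqrt{i+1}$ enter, since then $\tfrac1{c_i}=\tfrac{2i+3}{3}$ and a short computation comparing $\sqrt{i+1}(2i+3)$ with $\sqrt{i+2}(2i+2)$ gives a difference proportional to $i+1>0$. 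Dropping the nonnegative interior and $\delta_0$ terms and taking expectations leaves $A_k\,\mathbb{E}[\delta_k] \le \mathbb{E}\big[\sum_i \lambda_i \langle g_i, z_i - x_*\rangle\big]$, with $A_k = \lambda_k/c_k \asymp \gamma(k+1)^{3/2}$.

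Next I would bound the dual-averaging regret on the right. Because $H_i \preceq H_{i+1}$ (both $\nu$ and the stabilizer $\lambda_i G^2$ are nondecreasing) and $\psi_i$ is $1$-strongly convex in $\|\cdot\|_{H_i}$, the standard DA/FTRL lemma gives $\sum_i \lambda_i\langle g_i, z_i - x_*\rangle \le \psi_k(x_*) + \tfrac12\sum_i \lambda_i^2\, g_i^\top H_i^{-1} g_i$. The first term is controlled using $\|\nabla f\|_\infty \le G$ and $\sum_j \lambda_j \asymp \gamma(k+1)^{3/2}$, giving $\psi_k(x_*) \lesssim (G^2\gamma)^{1/3}(k+1)^{1/2}\|x_0-x_*\|_2^2$. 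For the adaptive term I would argue per coordinate: with $a_i = \lambda_i g_{id}^2$ and $\nu_{i+1,d}=\sum_{j\le i}a_j$, the stabilizer inequality $\lambda_i G^2 + \nu_{i,d}\ge \nu_{i+1,d}$ (from $g_{id}^2\le G^2$) combined with the cube-root comparison $\tfrac{\nu_{i+1,d}-\nu_{i,d}}{\nu_{i+1,d}^{1/3}} \le \tfrac32(\nu_{i+1,d}^{2/3}-\nu_{i,d}^{2/3})$ reduces the sum to $\tfrac32\sum_i \lambda_i(\nu_{i+1,d}^{2/3}-\nu_{i,d}^{2/3})$, and summation by parts (using that $\lambda_i$ is nondecreasing) bounds this by $\tfrac32\lambda_k \nu_{k+1,d}^{2/3}$. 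Summing over coordinates with $\nu_{k+1,d}\le G^2\sum_j\lambda_j$ gives $\sum_i\lambda_i^2 g_i^\top H_i^{-1}g_i \lesssim D\,\gamma^{5/3}G^{4/3}(k+1)^{3/2}$.

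Finally, dividing by $A_k \asymp \gamma(k+1)^{3/2}$ leaves $\mathbb{E}[\delta_k] \lesssim \tfrac{G^{2/3}\|x_0-x_*\|_2^2}{\gamma^{2/3}(k+1)} + D\gamma^{2/3}G^{4/3}$; setting $u=\gamma^{2/3}$ and balancing the two terms recovers the stated $\gamma \propto \|x_0-x_*\|_2^{3/2}/(k^{3/4}D^{3/4}G^{1/2})$ and the advertised $O(k^{-1/2})\,\|x_0-x_*\|_2\, G\, D^{1/2}$ rate, the constant $6$ absorbing the factors dropped above. I expect the \emph{main obstacle} to be the adaptive-error step: unlike textbook AdaGrad the numerator carries the growing weight $\lambda_i^2$ while the denominator is a cube root, so the familiar $\sum a_i/\sqrt{b_i}\le 2\sqrt{b_k}$ telescoping does not apply, and obtaining the sharp $(k+1)^{3/2}$ scaling requires all three devices together — the stabilizer inequality, the cube-root integral comparison, and summation by parts to pull $\lambda_k$ outside. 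This scaling is precisely what must cancel the $(k+1)^{3/2}$ growth of $A_k$ so that the step-size term becomes $k$-independent before $\gamma$ is optimized; getting any of the exponents wrong there destroys the rate.
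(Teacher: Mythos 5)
Your proposal is correct and follows essentially the same route as the paper: the same momentum telescoping via convexity applied twice together with the condition $\tfrac{1-c_{k}}{c_{k}}\lambda_{k}\leq\tfrac{1}{c_{k-1}}\lambda_{k-1}$ (the paper's Lemma~\ref{lem:ck_iterateweighting}), the same dual-averaging regret decomposition (the paper phrases it through the support function $V_{A_{k}}$ rather than a generic FTRL lemma, but the content is identical), and the same key bound $\sum_{t}\lambda_{t}^{2}g_{t}^{2}/(\lambda_{t}G^{2}+\nu_{t})^{1/3}\leq\tfrac{3}{2}\lambda_{k}\nu_{k+1}^{2/3}$ (Lemma~\ref{lem:error_sum_bound}), which you establish by the concavity of $x\mapsto x^{2/3}$ plus Abel summation where the paper uses induction on the same concavity inequality. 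The final simplification and optimization over $\gamma$ also match, so the argument is sound.
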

This bound is very loose. It results from  the application of $\nabla f(x,\xi)_{i}\leq G$
to bound each index of the gradient at each time-step separately,
which does not capture any of the adaptivity of the convergence rate. We discuss more precise bounds below.
Note that $\left\Vert g\right\Vert _{2}\leq D^{1/2}\left\Vert g\right\Vert _{\infty}=G D^{1/2}$,
so the dependence on dimensionality here is comparable to bounds established for non-adaptive stochastic methods which have bounds on the 2-norm of the gradient
on the right instead. Note also that we recommend using a flat $c_{k}=c$
momentum for non-convex problems, this decaying rate is only optimal
in the convex case. A value of $c=0.1$ corresponds to the $\beta=0.9$ momentum commonly used with SGD and Adam. 


\subsection{Adaptivity}
To understand the adaptivity of the method at a more granular level, we can express the convergence
rate as:
\begin{align*}
\mathbb{E}\left[f(x_{k})-f(x_{*})\right] & \leq\frac{3}{\gamma}\frac{1}{\left(k+1\right)^{3/2}}\sum_{d=0}^{D}\left(\mathbb{E}\left[\lambda_{k}\left(\sum_{i=0}^{k}\lambda_{i}g_{id}^{2}\right)^{2/3}\right]\right)\\
 & +\frac{3}{\gamma}\frac{1}{\left(k+1\right)^{3/2}}\sum_{d=0}^{D}\left(x_{0x}-x_{*d}\right)^{2}\mathbb{E}\left(\lambda_{k+1}G^{2}+\sum_{i=0}^{k}\lambda_{i}g_{id}^{2}\right)^{1/3}
\end{align*}
The convergence rate heavily depends on a weighted sequence:
\[
\sum_{d=0}^{D}\sum_{i=0}^{k}\lambda_{i}g_{id}^{2}=\gamma\sum_{d=0}^{D}\sum_{i=0}^{k}\left(i+1\right)^{1/2}g_{id}^{2},
\]
rather than an unweighted sum $\sum_{d=0}^{D}\sum_{i=0}^{k}g_{id}^{2}$ used in AdaGrad.
This is key to understanding the performance characteristics of MADGRAD
over traditional AdaGrad. In particular, large gradients at the early
stages have a smaller effect on the overall bound then they do for
AdaGrad. This can be quantified by considering the behavior when the
gradient norm bound is time dependent, i.e. $\left\Vert \nabla f(x_{i},\xi)\right\Vert _{\infty}\leq G_{i}$.
Then as we show in the appendix, for MADGRAD, when using optimal step-sizes:

\begin{align*}
\mathbb{E}\left[f(x_{k})-f(x_{*})\right] & \leq\frac{6}{\left(k+1\right)^{5/4}}\left\Vert x_{0}-x_{*}\right\Vert _{2} D^{1/2} \left(\sum_{i=0}^{k}\left(i+1\right)^{1/2}G_{i}^{2}\right)^{1/2},
\end{align*}
whereas for AdaGrad with the use of momentum:
\begin{align*}
\mathbb{E}\left[f(x_{k})-f(x_{*})\right] & \leq\frac{6}{\left(k+1\right)}\left\Vert x_{0}-x_{*}\right\Vert _{2}D^{1/2}\left(\sum_{i=0}^{k}G_{i}^{2}\right)^{1/2}.
\end{align*}
In MADGRAD the effect of an ``outlier'' $G_{i}$ that is particularly
large at time-step $i$ decays at a faster rate, with a power $5/4$
compared to linearly for AdaGrad. Using $\lambda_{i}$ with larger
power than $1/2$ is also possible within our momentumized-dual averaged gradient framework, which would result in a faster decay. We have found that the 1/2 factor is a "Sweet-spot", as larger values result in empirically slower convergence. Similar convergence rate bounds can be derived using the same proof technique, although they are prefixed by progressively larger constants (growing factorially in the power) as the power used is increased. In general, the advantage of MADGRAD over AdaGrad manifests in the common situation where the gradients are largest at the early stages of optimization.

\subsection{Comparison to Adam}
Although Adam is known to potentially diverge, we can consider the theoretical properties of the
AMSGrad variant of Adam, which is perhaps the smallest modification
to Adam that results in provable convergence. For AMSGrad, parameterized
by momentum $\beta_{1}\lambda^{i-1}$ at step i, assuming a bounded
domain with $R=\max_{x,y}\left\Vert x-y\right\Vert _{\infty}^{2}$,
defining $\gamma=\beta_{1}/\sqrt{\beta_{2}}$, and using step size
$\alpha_{i}=\alpha/\sqrt{i}$ \citep{reddibeyondadam2018}:
\begin{align*}
\mathbb{E}\sum_{i=1}^{k}f(x_{i})-f(x_{*}) & \leq\frac{\beta_{1}RG}{\left(1-\beta_{1}\right)^{2}\left(1-\lambda\right)^{2}}+\frac{R\sqrt{T}}{\alpha\left(1-\beta_{1}\right)}\sum_{d=1}^{D}\left(\hat{v}_{k,d}\right)^{1/2}\\
 & +\frac{\alpha\sqrt{1+\log k}}{\left(1-\beta_{1}\right)^{2}\left(1-\gamma\right)\sqrt{1-\beta_{2}}}\sum_{d}^{D}\left(\sum_{i=1}^{k}g_{id}^{2}\right)^{1/2}
\end{align*}
$\hat{v}$ is the maximum of the exponential moving average of the squared gradients, see \cite{reddibeyondadam2018} for further details. This result has a number of shortcomings compared to the MADGRAD. Firstly, note that the momentum term $1-\beta_1$, comparable to $c$ in MADGRAD divides each term in the bound. This means that momentum hurts rather than improves performance. The dependence on a bounded domain is also an undesirable property compared to MADGRAD, and the convergence theory of MADGRAD avoids log factors.

\section{Experimental Results}

\begin{figure}
\includegraphics[width=0.49\textwidth]{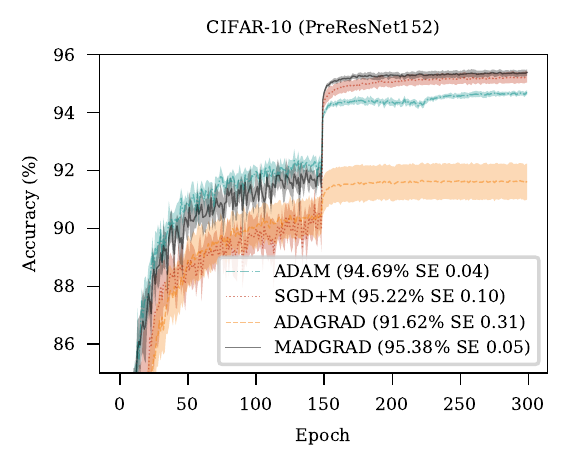}\includegraphics[width=0.49\textwidth]{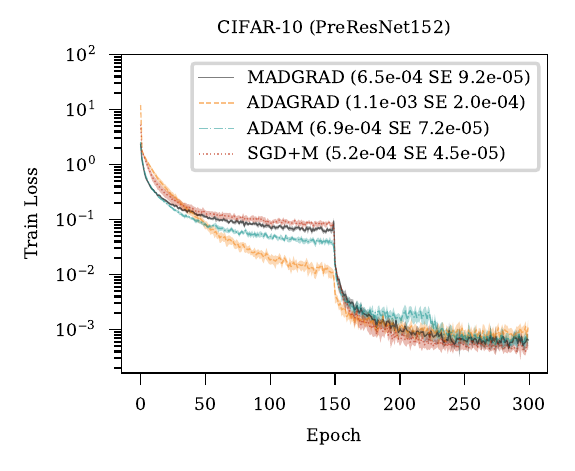}

\includegraphics[width=0.49\textwidth]{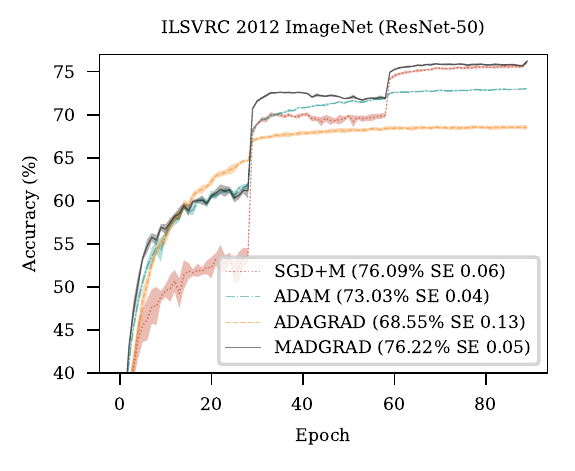}\includegraphics[width=0.49\textwidth]{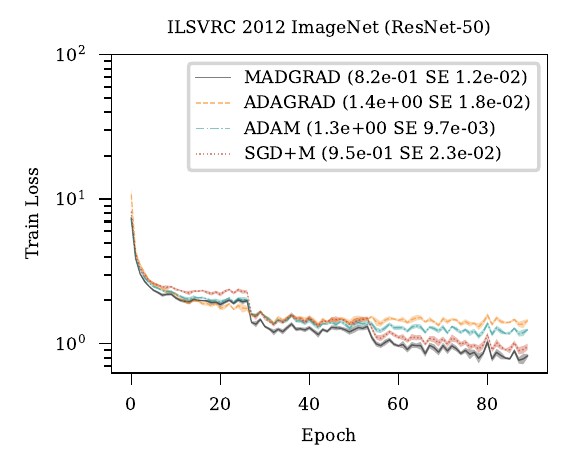}

\includegraphics[width=0.49\textwidth]{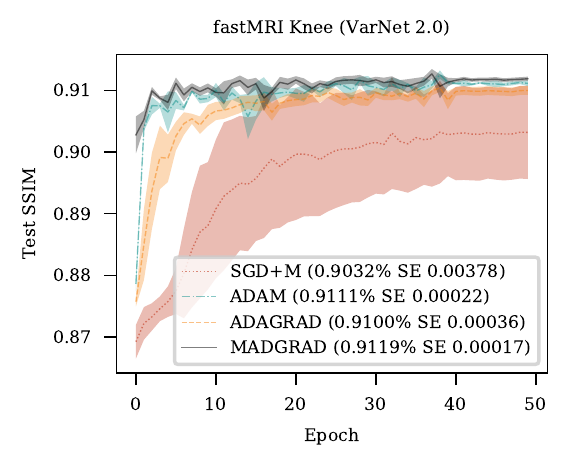}\includegraphics[width=0.49\textwidth]{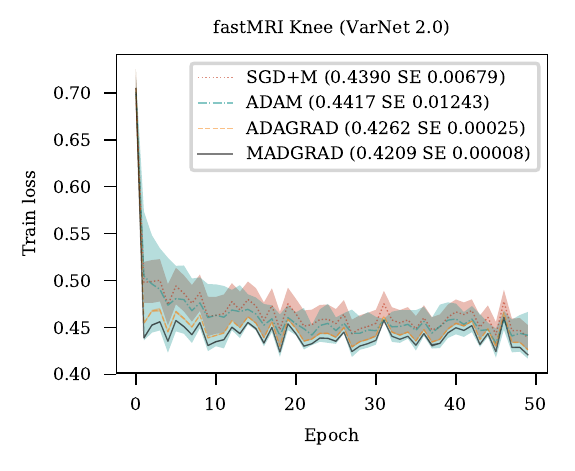}

\caption{\label{fig:first_figure}Experimental results for the CIFAR-10, ImageNet
and fastMRI Knee problems. Left column shows test set performance
and the right column shows training set performance.}
\end{figure}
\label{sec:experiments}In our experiments we compared MADGRAD against
SGD, Adam and AdaGrad. SGD is known to perform well on computer
vision classification problems due to its ability to produce solutions
that generalize better than adaptive methods. In contrast, Adam is
the method of choice in other domains with structured output where
overfitting is less of an issue. We present results across a large
number of problems across both categories to validate the general
purpose utility of the MADGRAD approach.

In our experiments we use the most common step-size reduction scheme used
in the literature for each respective problem. For all algorithms,
we performed a learning rate and decay sweep on a grid on intervals
of $[1\times10^{i},2.5\times10^{i},5\times10^{i}]$ for a range of
$i$ large enough to ensure the best parameters for each problem and
method were considered. We present the results from the best learning
rate and decay for each method when considering test set performance.
For other hyper-parameters, we used commonly accepted defaults for
each respective problem. Full parameter settings used for each method
are listed in the appendix. All presented results are averaged over
a number of seeds with error bars indicating 2 standard errors. Ten
seeds were used for CIFAR-10 and IWSLT14, whereas only five seeds
were used for the remaining larger scale problems.

\subsection{CIFAR10 image classification}

CIFAR10 \citep{cifar} is an established baseline method within the
deep learning community due to its manageable size and representative
performance within the class of data-limited supervised image classification
problems. It is particularly notable for showing clear differences
between adaptive and non-adaptive methods, as the former tend to overfit
considerably on this problem. Following standard practice, we apply
a data-augmentation step consisting of random horizontal flipping,
4px padding followed by random cropping to 32px at training time only.
We used a high-performance pre-activation ResNet architecture \citep{preact}
which is known to work well on this problem, consisting of 58,144,842
parameters across 152 layers. The depth of this network is representative
of the typical point of diminishing returns for network depth on computer
vision problems. As this network is greatly over-parameterized, each
method can be expected to fit the training data exactly, achieving
near zero loss, even with this data augmentation. For this reason,
this task is particularly sensitive to difference in generalization
performance of each method.

As illustrated in Figure \ref{fig:first_figure}, both Adam and AdaGrad
perform poorly on this problem in terms of test accuracy. The under-performance of Adam on this problem is well known \citep{marginal_adaptive2017},
and is typically attributed to convergence to poor local minima, as
the training set convergence is very rapid initially. 

MADGRAD exhibits excellent test accuracy results on this problem, achieving
the highest test accuracy among the methods considered. This demonstrates
that unlike Adam and AdaGrad, MADGRAD's adaptivity does not come at
the cost of inferior generalization performance.

\subsection{ILSVRC 2012 ImageNet image classification}

The ImageNet problem \citep{krizhevsky2012imagenet} is a larger problem
more representative of image classification problems encountered in
industrial applications where a large number of classes and higher
resolution input images are encountered. Like CIFAR10, overfitting
can be an issue on this problem for adaptive methods. We ran experiments
using the ResNet-50 architecture, which is considered the standard
baseline for this problem. This combination of data set and architecture
are one of the most studied in all of machine learning, which makes
it an ideal testing ground for optimization algorithms. 

Our setup used data preprocessing consisting of a mean {[}0.485, 0.456,
0.406{]} and std {[}0.229, 0.224, 0.225{]} normalization of the three
respective color channels, followed by a RandomResizedCrop PyTorch
operation to reduce the resolution to 224 pixels followed by a random
50\% chance of horizontal flipping. For test set evaluation a resize
to 256 pixels followed by a center crop to 224 pixels is used instead.
This setup was used as it is standard within the PyTorch community,
however it differs from the setup in \citet{he2016deep}, meaning
that test accuracy is close but not directly comparable.

On this problem both Adam and AdaGrad show similar convergence properties
as were seen on the CIFAR-10 problem. They both greatly under-perform
SGD with momentum. MADGRAD shows strong performance here as well,
achieving higher test accuracy than any other method for the majority
of the training time, and yielding the best final test accuracy. The
accuracy of MADGRAD at epoch 70 is 75.87, a level only reached by
SGD+M after the learning rate reduction at epoch 90, more than 28\%
longer. MADGRAD also performs the best on training loss on this problem. 

\subsection{fastMRI challenge MRI reconstruction}

The fastMRI Knee challenge \citep{zbontar2018fastmri} is a recently
proposed large-scale image-2-image problem. Unlike the previously
explored classification problems, the scale of this problem makes
overfitting a non-concern given the number of weights in the largest
models currently trainable on contemporary hardware, meaning that
adaptive methods are not prone to overfitting. This problem is also
particularly notable for being poorly conditioned among
image processing problems. Part of the reason for the poor conditioning
is the high depth of current SOTA models, such as the VarNet 2.0 \citet{sriram2020end}
model that we used. This model has 12,931,532 parameters over 273
layers. Our implementation uses 16 auto-calibration lines and an offset
equispaced sampling pattern \citep{defazio2019offset}, which is much
closer to a realistic clinical configuration than the challenge's
random sampling mask.

Figure \ref{fig:first_figure} shows a number of interesting properties
of the methods. SGD+M exhibits extremely variable performance on this problem, and under-performs other methods by a large margin. AdaGrad
also has a clear performance gap compared to the top performing methods,
MADGRAD and Adam. MADGRAD is the best performer, with a small but
statistically significant improvement over Adam, which is the standard
method for this problem. Training set performance shows a much higher
degree of variability, making comparisons difficult, however MADGRAD
appears to also be the best performing method on training loss as
well.

\subsection{Machine translation with a recurrent neural network}

\begin{figure}
\includegraphics[width=0.49\textwidth]{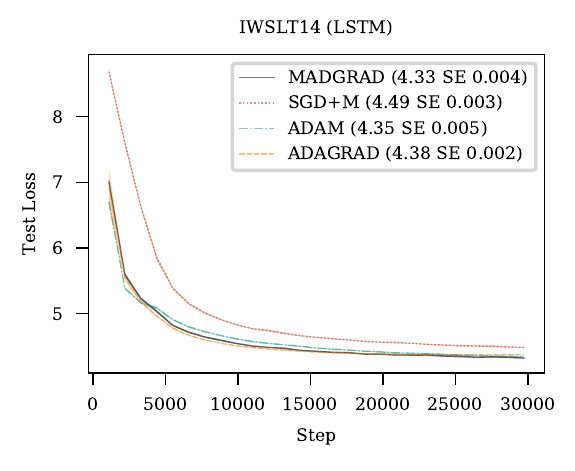}\includegraphics[width=0.49\textwidth]{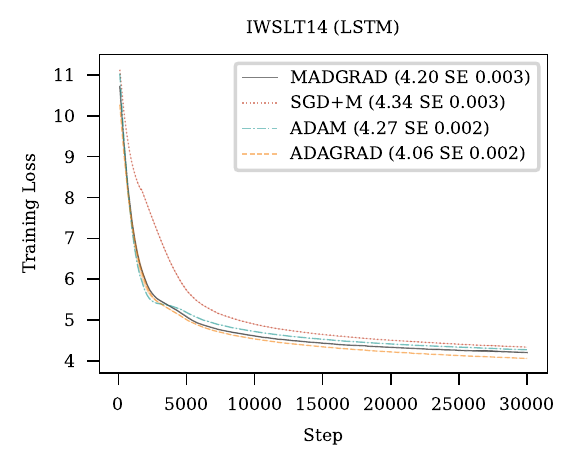}

\includegraphics[width=0.49\textwidth]{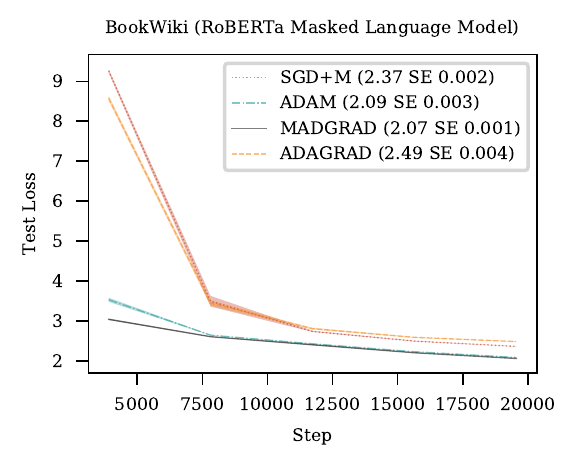}\includegraphics[width=0.49\textwidth]{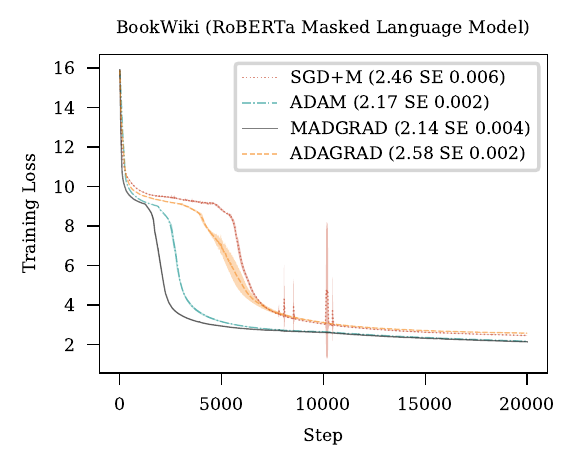}

\caption{\label{fig:second_figure}Experimental results for the IWSLT14 and
BookWiki problems. Left column shows test set performance and the
right column shows training set performance.}
\end{figure}
For a machine translation baseline we trained our model on the IWSLT14
Germain-to-English dataset \citep{cettolo2014report}, using a popular
LSTM variant introduced by \citet{wiseman-rush-2016-sequence}.

Figure \ref{fig:second_figure} shows that all of the adaptive methods
out-perform SGD on this problem by a significant margin. The results
are close but MADGRAD has a small performance lead, yielding 4.33
test loss compared to 4.38 for AdaGrad and 4.35 for Adam. In training
loss AdaGrad's lead over the other methods can be attributed to a
slight degree of overfitting; these is a slight increase in test loss
near the end of optimization for AdaGrad which indicates this.

\subsection{Masked language modeling with a Transformer}

Bidirectional training objectives, as used in the BERT approach
\citep{BERT}, have quickly established themselves as the new standard
for large-scale pre-training of natural language models. We performed
our experiments using the RoBERTa variant of BERT\_BASE \citep{liu2019roberta},
a 110M parameter transformer model. This model is large enough to
provide a realistic optimization test-bed for large-scale Transformer
models while still being trainable in in time comparable to a ResNet-50
model on ImageNet. 

Similar to the LSTM problem, SGD+M performs poorly here. It exhibits
some spikes where training loss rapidly degrades then recovers quickly.
both Adam and MADGRAD perform well, however MADGRAD is significantly
faster initially, and also achieves a better final test loss of 2.07
compared to 2.09 achieved by Adam.

\section{Discussion}
\subsection{Hyper-parameter settings}
We have made the following observations during our experimentation:
\begin{itemize}
    \item Typically, using the default weight decay from previous SGD/Adam training runs will result in poor generalization performance. Weight decay will need to be much less, potentially even 0, for good performance. We recommend reducing the weight-decay before any learning rate tuning.
    \item Learning rate values are not directly comparable to SGD/Adam, a full learning rate sweep is necessary to find the optimal value. In the appendix we list the best LR values for each of our test problems, which should form a good starting point. Sweeping across a power-of-2 grid is recommended as the value several of orders of magnitude different from SGD/Adam.
    \item Momentum values used for SGD/Adam should work without issue, by setting $c=1-\beta$ for momentum $\beta$.
\end{itemize}

\subsection{Empirical results in deep learning}
We believe our experimental validation is one of the most comprehensive
performed for any newly proposed deep learning optimization method.
More than 20,000 hours of GPU time were needed to perform the grid
search and final evaluation mentioned above, as we performed the search
for each of the methods considered, rather than just the MADGRAD method.
This prevents our method looking better than it would otherwise look
due to hyper-parameter optimization rather than an actual performance
advantage. Our comparison also includes a number of large and realistic
problems, which are better representative of modern deep learning
compared to small scale problems.  Finally, our final results are
averaged over a sufficiently large number of seeds for each problem
to ensure that run-to-run variation is not mistaken for actual performance
differences. This is particularly a problem with CIFAR-10, yet many
published results still use only a single seed for comparisons on
that problem. For these reasons, we believe our experimental results
for MADGRAD are representative of the performance of the method across
modern large-scale empirical risk minimization problems.

\subsection{Sparsity}
The reliance on a slowly updating moving average for the squared gradient within the Adam method greatly hinders its application to sparse models. In contrast, MADGRAD maintains a simple sum of the squared gradient entries which may be updated in a sparse fashion. One potential problem in the sparse case is that the buffer of iterates (rather than gradients) is maintained with a moving average. To support sparse applications, the iterate buffer may be removed, effectively equivalent to setting $c=1$. 

\section{Conclusion}
We have proposed the MADGRAD (Momentumized, Adaptive, Dual averaged GRADient) method as a general purpose optimizer for deep learning. Given MADGRAD's state-of-the-art empirical performance, together with its strong theoretical foundations, it is an excellent first choice of optimizer across many sub-fields of machine learning. 

\bibliographystyle{plainnat}
\bibliography{madgrad}

\begin{thebibliography}{35}
\providecommand{\natexlab}[1]{#1}
\providecommand{\url}[1]{\texttt{#1}}
\expandafter\ifx\csname urlstyle\endcsname\relax
  \providecommand{\doi}[1]{doi: #1}\else
  \providecommand{\doi}{doi: \begingroup \urlstyle{rm}\Url}\fi

\bibitem[Balles and Hennig(2018)]{dissecting2018}
Lukas Balles and Philipp Hennig.
\newblock Dissecting adam: The sign, magnitude and variance of stochastic
  gradients.
\newblock In \emph{Proceedings of the 35th International Conference on Machine
  Learning (ICML2018}, 2018.

\bibitem[Bernstein et~al.(2018)Bernstein, Wang, Azizzadenesheli, and
  Anandkumar]{pmlr-v80-bernstein18a}
Jeremy Bernstein, Yu-Xiang Wang, Kamyar Azizzadenesheli, and Animashree
  Anandkumar.
\newblock sign{SGD}: Compressed optimisation for non-convex problems.
\newblock In \emph{Proceedings of the 35th International Conference on Machine
  Learning}, 2018.

\bibitem[Brown et~al.(2020)Brown, Mann, Ryder, Subbiah, Kaplan, Dhariwal,
  Neelakantan, Shyam, Sastry, Askell, Agarwal, Herbert-Voss, Krueger, Henighan,
  Child, Ramesh, Ziegler, Wu, Winter, Hesse, Chen, Sigler, Litwin, Gray, Chess,
  Clark, Berner, McCandlish, Radford, Sutskever, and Amodei]{brown2020language}
Tom~B. Brown, Benjamin Mann, Nick Ryder, Melanie Subbiah, Jared Kaplan,
  Prafulla Dhariwal, Arvind Neelakantan, Pranav Shyam, Girish Sastry, Amanda
  Askell, Sandhini Agarwal, Ariel Herbert-Voss, Gretchen Krueger, Tom Henighan,
  Rewon Child, Aditya Ramesh, Daniel~M. Ziegler, Jeffrey Wu, Clemens Winter,
  Christopher Hesse, Mark Chen, Eric Sigler, Mateusz Litwin, Scott Gray,
  Benjamin Chess, Jack Clark, Christopher Berner, Sam McCandlish, Alec Radford,
  Ilya Sutskever, and Dario Amodei.
\newblock Language models are few-shot learners.
\newblock \emph{Advances in Neural Information Processing Systems 33
  pre-proceedings (NeurIPS 2020)}, 2020.

\bibitem[Cettolo et~al.(2014)Cettolo, Niehues, St{\"u}ker, Bentivogli, and
  Federico]{cettolo2014report}
Mauro Cettolo, Jan Niehues, Sebastian St{\"u}ker, Luisa Bentivogli, and
  Marcello Federico.
\newblock Report on the 11th {IWSLT} evaluation campaign, {IWSLT} 2014.
\newblock 2014.

\bibitem[Choi et~al.(2020)Choi, Shallue, Nado, Lee, Maddison, and
  Dahl]{choi2020empirical}
Dami Choi, Christopher~J. Shallue, Zachary Nado, Jaehoon Lee, Chris~J.
  Maddison, and George~E. Dahl.
\newblock On empirical comparisons of optimizers for deep learning, 2020.

\bibitem[Defazio(2019)]{defazio2019offset}
Aaron Defazio.
\newblock Offset sampling improves deep learning based accelerated mri
  reconstructions by exploiting symmetry.
\newblock \emph{arXiv preprint arXiv:1912.01101}, 2019.

\bibitem[Defazio(2020)]{defazio2020understanding}
Aaron Defazio.
\newblock Understanding the role of momentum in non-convex optimization:
  Practical insights from a lyapunov analysis.
\newblock \emph{arXiv preprint arXiv:2010.00406}, 2020.

\bibitem[Defazio and Gower(2020)]{defazio2020power}
Aaron Defazio and Robert~M. Gower.
\newblock The power of factorial powers: New parameter settings for
  (stochastic) optimization, 2020.

\bibitem[D{\'e}fossez et~al.(2020)D{\'e}fossez, Bottou, Bach, and
  Usunier]{defossez2020simple}
Alexandre D{\'e}fossez, L{\'e}on Bottou, Francis Bach, and Nicolas Usunier.
\newblock A simple convergence proof of adam and adagrad, 2020.

\bibitem[Devlin et~al.(2019)Devlin, Chang, Lee, and Tout]{BERT}
Jacob Devlin, Ming-Wei Chang, Kenton Lee, and Kristina Tout.
\newblock Bert: Pre-training of deepbidirectional transformers for language
  understan.
\newblock \emph{Proceedings of the 2019 Conferenceof the North American Chapter
  of the Association for Computational Lingustics}, 2019.

\bibitem[Duchi et~al.(2011)Duchi, Hazan, and Singer]{duchi2011adaptive}
John Duchi, Elad Hazan, and Yoram Singer.
\newblock Adaptive subgradient methods for online learning and stochastic
  optimization.
\newblock \emph{Journal of machine learning research}, 12\penalty0 (7), 2011.

\bibitem[He et~al.(2016{\natexlab{a}})He, Zhang, Ren, and Sun]{he2016deep}
Kaiming He, Xiangyu Zhang, Shaoqing Ren, and Jian Sun.
\newblock Deep residual learning for image recognition.
\newblock In \emph{Proceedings of the IEEE conference on computer vision and
  pattern recognition}, 2016{\natexlab{a}}.

\bibitem[He et~al.(2016{\natexlab{b}})He, Zhang, Ren, and Sun]{preact}
Kaiming He, Xiangyu Zhang, Shaoqing Ren, and Jian Sun.
\newblock Identity mappings in deep residual networks.
\newblock In \emph{Computer Vision -- ECCV 2016}, 2016{\natexlab{b}}.

\bibitem[Kingma and Ba(2014)]{kingma2014adam}
Diederik~P Kingma and Jimmy Ba.
\newblock Adam: A method for stochastic optimization.
\newblock \emph{arXiv preprint arXiv:1412.6980}, 2014.

\bibitem[Krizhevsky(2009)]{cifar}
Alex Krizhevsky.
\newblock Learning multiple layers of features from tiny images.
\newblock Technical report, University of Toronto, 2009.

\bibitem[Krizhevsky et~al.(2012)Krizhevsky, Sutskever, and
  Hinton]{krizhevsky2012imagenet}
Alex Krizhevsky, Ilya Sutskever, and Geoffrey~E Hinton.
\newblock Imagenet classification with deep convolutional neural networks.
\newblock In \emph{Advances in neural information processing systems}, pages
  1097--1105, 2012.

\bibitem[Levy et~al.(2018)Levy, Yurtsever, and Cevher]{accel_adagrad2018}
Kfir~Y. Levy, Alp Yurtsever, and Volkan Cevher.
\newblock Online adaptive methods, universality and acceleration.
\newblock In \emph{Advances in Neural Information Processing Systems}, 2018.

\bibitem[Li and Orabona(2019)]{pmlr-v89-li19c}
Xiaoyu Li and Francesco Orabona.
\newblock On the convergence of stochastic gradient descent with adaptive
  stepsizes.
\newblock In Kamalika Chaudhuri and Masashi Sugiyama, editors,
  \emph{Proceedings of Machine Learning Research}, volume~89 of
  \emph{Proceedings of Machine Learning Research}, pages 983--992. PMLR, 16--18
  Apr 2019.
\newblock URL \url{http://proceedings.mlr.press/v89/li19c.html}.

\bibitem[Liu et~al.(2019)Liu, Ott, Goyal, Du, Joshi, Chen, Levy, Lewis,
  Zettlemoyer, and Stoyanov]{liu2019roberta}
Yinhan Liu, Myle Ott, Naman Goyal, Jingfei Du, Mandar Joshi, Danqi Chen, Omer
  Levy, Mike Lewis, Luke Zettlemoyer, and Veselin Stoyanov.
\newblock Roberta: A robustly optimized bert pretraining approach.
\newblock \emph{arXiv preprint arXiv:1907.11692}, 2019.

\bibitem[Nesterov and Shikhman(2015)]{nesterov2015quasi}
Yu~Nesterov and Vladimir Shikhman.
\newblock Quasi-monotone subgradient methods for nonsmooth convex minimization.
\newblock \emph{Journal of Optimization Theory and Applications}, 165\penalty0
  (3):\penalty0 917--940, 2015.

\bibitem[Nesterov(2009)]{nesterov2009primal}
Yurii Nesterov.
\newblock Primal-dual subgradient methods for convex problems.
\newblock \emph{Mathematical programming}, 120\penalty0 (1):\penalty0 221--259,
  2009.

\bibitem[Rajbhandari et~al.(2019)Rajbhandari, Rasley, Ruwase, and
  He]{rajbhandari2019zero}
Samyam Rajbhandari, Jeff Rasley, Olatunji Ruwase, and Yuxiong He.
\newblock Zero: Memory optimizations toward training trillion parameter models.
\newblock ArXiv, 2019.

\bibitem[Reddi et~al.(2018)Reddi, Kale, and Kumar]{reddibeyondadam2018}
Sashank~J. Reddi, Satyen Kale, and Sanjiv Kumar.
\newblock On the convergence of adam and beyond.
\newblock In \emph{International Conference on Learning Representations}, 2018.

\bibitem[Schmidt et~al.(2020)Schmidt, Schneider, and
  Hennig]{schmidt2020descending}
Robin~M. Schmidt, Frank Schneider, and Philipp Hennig.
\newblock Descending through a crowded valley -- benchmarking deep learning
  optimizers, 2020.

\bibitem[Sebbouh et~al.(2020)Sebbouh, Gower, and
  Defazio]{sebbouh2020convergence}
Othmane Sebbouh, Robert~M Gower, and Aaron Defazio.
\newblock On the convergence of the stochastic heavy ball method.
\newblock \emph{arXiv preprint arXiv:2006.07867}, 2020.

\bibitem[Sriram et~al.(2020)Sriram, Zbontar, Murrell, Defazio, Zitnick,
  Yakubova, Knoll, and Johnson]{sriram2020end}
Anuroop Sriram, Jure Zbontar, Tullie Murrell, Aaron Defazio, C~Lawrence
  Zitnick, Nafissa Yakubova, Florian Knoll, and Patricia Johnson.
\newblock End-to-end variational networks for accelerated mri reconstruction.
\newblock \emph{arXiv preprint arXiv:2004.06688}, 2020.

\bibitem[Sutskever et~al.(2013)Sutskever, Martens, Dahl, and
  Hinton]{sutskever2013importance}
Ilya Sutskever, James Martens, George Dahl, and Geoffrey Hinton.
\newblock On the importance of initialization and momentum in deep learning.
\newblock In \emph{International conference on machine learning}, pages
  1139--1147, 2013.

\bibitem[Tieleman and Hinton(2012)]{rmsprop2012}
Tijmen Tieleman and Geoffrey Hinton.
\newblock Lecture 6.5 - rmsprop, coursera: Neural networks for machine
  learning. technical report, 2012.
\newblock URL
  \url{https://www.cs.toronto.edu/~tijmen/csc321/slides/lecture_slides_lec6.pdf}.

\bibitem[Ward et~al.(2019)Ward, Wu, and Bottou]{pmlr-v97-ward19a}
Rachel Ward, Xiaoxia Wu, and Leon Bottou.
\newblock {A}da{G}rad stepsizes: Sharp convergence over nonconvex landscapes.
\newblock In \emph{Proceedings of the 36th International Conference on Machine
  Learning}, 2019.

\bibitem[Wilson et~al.(2017)Wilson, Roelofs, Stern, Srebro, and
  Recht]{marginal_adaptive2017}
Ashia~C Wilson, Rebecca Roelofs, Mitchell Stern, Nati Srebro, and Benjamin
  Recht.
\newblock The marginal value of adaptive gradient methods in machine learning.
\newblock In \emph{Advances in Neural Information Processing Systems}, 2017.

\bibitem[Wiseman and Rush(2016)]{wiseman-rush-2016-sequence}
Sam Wiseman and Alexander~M. Rush.
\newblock Sequence-to-sequence learning as beam-search optimization.
\newblock In \emph{Proceedings of the 2016 Conference on Empirical Methods in
  Natural Language Processing}. Association for Computational Linguistics,
  2016.

\bibitem[Zbontar et~al.(2018)Zbontar, Knoll, Sriram, Muckley, Bruno, Defazio,
  Parente, Geras, Katsnelson, Chandarana, et~al.]{zbontar2018fastmri}
Jure Zbontar, Florian Knoll, Anuroop Sriram, Matthew~J Muckley, Mary Bruno,
  Aaron Defazio, Marc Parente, Krzysztof~J Geras, Joe Katsnelson, Hersh
  Chandarana, et~al.
\newblock {fastMRI}: An open dataset and benchmarks for accelerated {MRI}.
\newblock \emph{arXiv preprint arXiv:1811.08839}, 2018.

\bibitem[Zhou et~al.(2020)Zhou, Chen, Cao, Tang, Yang, and
  Gu]{zhou2020adaptive}
Dongruo Zhou, Jinghui Chen, Yuan Cao, Yiqi Tang, Ziyan Yang, and Quanquan Gu.
\newblock On the convergence of adaptive gradient methods for nonconvex
  optimization, 2020.

\bibitem[Zou et~al.(2019{\natexlab{a}})Zou, Shen, Jie, Sun, and
  Liu]{zou2019weighted}
Fangyu Zou, Li~Shen, Zequn Jie, Ju~Sun, and Wei Liu.
\newblock Weighted adagrad with unified momentum, 2019{\natexlab{a}}.

\bibitem[Zou et~al.(2019{\natexlab{b}})Zou, Shen, Jie, Zhang, and
  Liu]{zou2019a}
Fangyu Zou, Li~Shen, Zequn Jie, Weizhong Zhang, and Wei Liu.
\newblock A sufficient condition for convergences of adam and rmsprop.
\newblock \emph{CVPR}, 2019{\natexlab{b}}.

\end{thebibliography}

\appendix

\section{Parameter settings}

\subsection*{CIFAR10}

Our data augmentation pipeline followed standard practice: random
horizontal flipping, then random cropping to 32x32, then normalization
by centering around (0.5, 0.5, 0.5).

\begin{tabular}{|c|c|}
\hline 
Hyper-parameter  & Value\tabularnewline
\hline 
\hline 
Architecture  & PreAct ResNet152\tabularnewline
\hline 
Epochs  & 300\tabularnewline
\hline 
GPUs  & 1xV100\tabularnewline
\hline 
Batch Size per GPU  & 128\tabularnewline
\hline 
LR schedule & 150-225 tenthing\tabularnewline
\hline 
Seeds & 10\tabularnewline
\hline 
\end{tabular}

\begin{tabular}{|c|c|c|}
\hline 
Method & LR & Decay\tabularnewline
\hline 
\hline 
MADGRAD & 2.5e-4 & 0.0001\tabularnewline
\hline 
AdaGrad & 0.01 & 0.0001\tabularnewline
\hline 
Adam & 0.00025 & 0.0001\tabularnewline
\hline 
SGD & 0.1 & 0.0001\tabularnewline
\hline 
\end{tabular}

\subsection*{ImageNet}

A standard LR schedule was used, where the learning rate is decreased
10 fold every 30 epochs. Interestingly, for this problem, a smaller
decay constant improved the performance of MADGRAD, but didn't yield
any improvement to the other methods considered.

\begin{tabular}{|c|c|}
\hline 
Hyper-parameter  & Value\tabularnewline
\hline 
\hline 
Architecture  & ResNet50\tabularnewline
\hline 
Epochs  & 90\tabularnewline
\hline 
GPUs  & 8xV100\tabularnewline
\hline 
Batch size per GPU  & 32\tabularnewline
\hline 
LR schedule & 30-60-90 tenthing\tabularnewline
\hline 
Seeds & 5\tabularnewline
\hline 
\end{tabular}

\begin{tabular}{|c|c|c|}
\hline 
Method & LR & Decay\tabularnewline
\hline 
\hline 
MADGRAD & 0.001 & 2.5e-5\tabularnewline
\hline 
AdaGrad & 0.01 & 0.0001\tabularnewline
\hline 
Adam & 0.00025 & 0.0001\tabularnewline
\hline 
SGD & 0.1 & 0.0001\tabularnewline
\hline 
\end{tabular}

\subsection*{fastMRI}

For this task, the best learning rate schedule is a flat schedule,
with a small number of fine-tuning epochs at the end to stabilize.
To this end, we decreased the learning rate 10 fold at epoch 40. 

\begin{tabular}{|c|c|}
\hline 
Hyper-parameter  & Value\tabularnewline
\hline 
\hline 
Architecture  & 12 layer VarNet 2\tabularnewline
\hline 
Epochs  & 50\tabularnewline
\hline 
GPUs  & 8xV100\tabularnewline
\hline 
Batch size per GPU  & 1\tabularnewline
\hline 
Acceleration factor  & 4\tabularnewline
\hline 
Low frequency lines  & 16\tabularnewline
\hline 
Mask type  & Offset-1\tabularnewline
\hline 
LR schedule & 40 tenthing\tabularnewline
\hline 
Seeds & 5\tabularnewline
\hline 
\end{tabular}

\begin{tabular}{|c|c|c|}
\hline 
Method & LR & Decay\tabularnewline
\hline 
\hline 
MADGRAD & 0.01 & 0.0\tabularnewline
\hline 
AdaGrad & 0.25 & 0.0\tabularnewline
\hline 
Adam & 0.00025 & 0.0\tabularnewline
\hline 
SGD & 0.01 & 0.0\tabularnewline
\hline 
\end{tabular}

\subsection*{IWSLT14}

Our implementation used FairSeq defaults except for the parameters
listed below. 

\begin{tabular}{|c|c|}
\hline 
Hyper-parameter  & Value\tabularnewline
\hline 
\hline 
Architecture  & lstm\_wiseman\_iwslt\_de\_en\tabularnewline
\hline 
Max updates  & 60,000\tabularnewline
\hline 
GPUs  & 1xV100\tabularnewline
\hline 
Max tokens per batch  & 4096\tabularnewline
\hline 
Warmup steps  & 4000\tabularnewline
\hline 
Dropout  & 0.3\tabularnewline
\hline 
Label smoothing  & 0.1\tabularnewline
\hline 
Share decoder/input/output embed  & True\tabularnewline
\hline 
Float16  & True\tabularnewline
\hline 
Update Frequency  & 1\tabularnewline
\hline 
LR schedule & Inverse square-root\tabularnewline
\hline 
Seeds & 10\tabularnewline
\hline 
\end{tabular}

\begin{tabular}{|c|c|c|}
\hline 
Method & LR & Decay\tabularnewline
\hline 
\hline 
MADGRAD & 0.025 & 5e-6\tabularnewline
\hline 
AdaGrad & 0.25 & 1e-5\tabularnewline
\hline 
Adam & 0.01 & 0.05\tabularnewline
\hline 
SGD & 1.0 & 1e-5\tabularnewline
\hline 
\end{tabular}

\subsection*{BookWiki}

Our implementation used FairSeq defaults except for the parameters
listed below.

\begin{tabular}{|c|c|}
\hline 
Hyper-parameter  & Value\tabularnewline
\hline 
\hline 
Architecture  & roberta\_base\tabularnewline
\hline 
Task  & masked\_lm\tabularnewline
\hline 
Max updates  & 20,000\tabularnewline
\hline 
GPUs  & 8xV100\tabularnewline
\hline 
Max tokens per sample  & 512\tabularnewline
\hline 
Dropout & 0.1\tabularnewline
\hline 
Attention Dropout & 0.1\tabularnewline
\hline 
Max sentences & 16\tabularnewline
\hline 
Warmup  & 10,000\tabularnewline
\hline 
Sample Break Mode  & Complete\tabularnewline
\hline 
Share decoder/input/output embed  & True\tabularnewline
\hline 
Float16  & True\tabularnewline
\hline 
Update Frequency  & 16\tabularnewline
\hline 
LR schedule & Polynomial decay\tabularnewline
\hline 
Seeds & 5\tabularnewline
\hline 
Gradient clipping & 0.5\tabularnewline
\hline 
\end{tabular}

\begin{tabular}{|c|c|c|}
\hline 
Method & LR & Decay\tabularnewline
\hline 
\hline 
MADGRAD & 0.005 & 0.0\tabularnewline
\hline 
AdaGrad & 0.01 & 0.0\tabularnewline
\hline 
Adam & 0.001 & 0.0\tabularnewline
\hline 
SGD & 1.0 & 0.0\tabularnewline
\hline 
\end{tabular}

\section{Theory}

\subsection{Theoretical variant}
We analyze a variant of the MADGRAD algorithm, using fixed step size
$\gamma$, and $\lambda_{k} = \gamma\sqrt{k+1}$:
\begin{align}
s_{k+1} & =s_{k}+\lambda_{k}g_{k},\nonumber \\
v_{k+1} & =v_{k}+\lambda_{k}g_{k}^{2},\nonumber \\
z_{k+1} & =x_{0}-\frac{1}{\sqrt[3]{\lambda_{k+1}G^{2}+v_{k+1}}}s_{k+1},\nonumber \\
x_{k+1} & =\left(1-c_{k+1}\right)x_{k}+c_{k+1}z_{k+1}.\label{eq:madgradv}
\end{align}
This variant differs from Algorithm \ref{alg:MADGRAD} just with the
addition of $\lambda_{k}G^{2}$ in the denominator, which is necessitated
by our analysis method. Note that the AdaGrad DA formulation originally
proposed by \citet{duchi2011adaptive} also requires this extra term.

\subsection{Support function}
We define a matrix analogue of the support function from \citet{nesterov2009primal}:
\begin{equation}
V_{A_{k}}(-s_{k})=\max_{x}\left\{ -\left\langle s_{k},x-x_{0}\right\rangle -\frac{1}{2}\left\Vert x-x_{0}\right\Vert _{A_{k}}^{2}\right\} .\label{eq:value_function}
\end{equation}
In this work we only consider diagonal $A_{k}$, represented by a
vector $a_{k}:$
\[
A_{k}=\text{diag}(a_{k}).
\]
In this notation, we have $\alpha_{k}=\sqrt[3]{\lambda_{k} G^2 + v_{k}}$. The maximizer of expression \ref{eq:value_function} is (using component-wise
division): 
\[
z_{k}=x_{0}-\frac{s_{k}}{\alpha_{k}}.
\]
Since
$v_{k+1}$ is non-decreasing, it's clear that:
\begin{equation}
V_{A_{k+1}}\left(-s_{k}\right)\leq V_{A_{k}}\left(-s_{k}\right).\label{eq:V_decrease}
\end{equation}
We will also use the following properties, which follow directly by
modifying the argument in \citet{nesterov2009primal} to handle scaling
matrices instead of constants:
\begin{equation}
\nabla V_{A_{k}}(-s_{k})=z_{k}-x_{0},\label{eq:v-grad}
\end{equation}
\begin{equation}
V_{A_{k}}(s+\delta)\leq V_{A_{k}}(s)+\left\langle \delta,\nabla V_{A_{k}}(s)\right\rangle +\frac{1}{2}\left\Vert \delta\right\Vert _{A_{k}^{-1}}^{2}.\label{eq:v-l-smooth}
\end{equation}

\subsection{Lemmas}
\begin{lem}
\label{lem:error_sum_bound}For all natural $k$, assuming $\lambda_{k+1}\geq\lambda_{k}$:
\[
\sum_{t=0}^{k}\frac{\lambda_{t}^{2}g_{t}^{2}}{\left(\lambda_{t}G^{2}+\sum_{i=0}^{t-1}\lambda_{i}g_{i}^{2}\right)^{1/3}}\leq\frac{3}{2}\lambda_{k}\left(\sum_{i=0}^{k}\lambda_{i}g_{i}^{2}\right)^{2/3}.
\]
\end{lem}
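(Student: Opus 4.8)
The plan is to reduce the sum to a telescoping bound after replacing the denominator by the running partial sum of the weighted squared gradients. Write $b_t = \lambda_t g_t^2 \ge 0$ and let $S_t = \sum_{i=0}^{t-1}\lambda_i g_i^2$ denote the partial sums, so that $S_0 = 0$ and $S_{t+1} = S_t + b_t$; the right-hand side of the lemma is then $\tfrac{3}{2}\lambda_k S_{k+1}^{2/3}$, and each summand on the left is $\lambda_t b_t / (\lambda_t G^2 + S_t)^{1/3}$.

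The key step, and the one that makes everything else routine, is to absorb the stabilizing term $\lambda_t G^2$ into the denominator. Since the bounded-gradient assumption gives $g_t^2 \le G^2$, we have $\lambda_t G^2 \ge \lambda_t g_t^2 = b_t$, hence $\lambda_t G^2 + S_t \ge S_t + b_t = S_{t+1}$ and therefore $(\lambda_t G^2 + S_t)^{1/3} \ge S_{t+1}^{1/3}$. This is exactly why the extra $\lambda_{k+1}G^2$ term is needed in the analyzed variant: it regularizes the $t=0$ term, where $S_0 = 0$ would otherwise force a division by zero, and it converts the denominator into the clean running sum $S_{t+1}^{1/3}$. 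Using this bound together with the monotonicity assumption $\lambda_t \le \lambda_k$ for $t \le k$, I can pull $\lambda_k$ out of the sum and reduce to showing
\[
\sum_{t=0}^{k}\frac{S_{t+1}-S_t}{S_{t+1}^{1/3}} \le \frac{3}{2}\, S_{k+1}^{2/3}.
\]

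For this last inequality I would use the integral comparison coming from the monotonicity of $x \mapsto x^{-1/3}$. On the interval $[S_t, S_{t+1}]$ the integrand $x^{-1/3}$ is at least its right-endpoint value $S_{t+1}^{-1/3}$, so
\[
\frac{S_{t+1}-S_t}{S_{t+1}^{1/3}} \le \int_{S_t}^{S_{t+1}} x^{-1/3}\,dx = \frac{3}{2}\left(S_{t+1}^{2/3} - S_t^{2/3}\right).
\]
Summing over $t$ telescopes to $\tfrac{3}{2}\bigl(S_{k+1}^{2/3} - S_0^{2/3}\bigr) = \tfrac{3}{2}S_{k+1}^{2/3}$ since $S_0 = 0$, which closes the argument. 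I do not expect a genuine obstacle; the only thing to watch is that the integral step degenerates if some $S_{t+1} = S_t$ (i.e. $b_t = 0$), but then that summand is simply zero and may be dropped, so the inequality is unaffected.
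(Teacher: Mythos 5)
Your proposal is correct and is essentially the paper's own argument in unrolled form: the paper proves the bound by induction, using $g_t^2\le G^2$ to absorb the $\lambda_t G^2$ term into the running sum and a concavity bound $\tfrac{3}{2}(b-h)^{2/3}\le\tfrac{3}{2}b^{2/3}-hb^{-1/3}$, which is exactly your integral comparison $\frac{S_{t+1}-S_t}{S_{t+1}^{1/3}}\le\tfrac{3}{2}\bigl(S_{t+1}^{2/3}-S_t^{2/3}\bigr)$ applied at the final step of each induction. Your direct telescoping presentation (pulling out $\lambda_k$ once via monotonicity rather than step by step) is a clean equivalent and introduces no gap.
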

\begin{proof}
We prove by induction. For the base case:
\[
\frac{g_{0}^{2}}{\left(G^{2}\right)^{1/3}}\leq g^{2(1-1/3)}=\left(g^{2}\right)^{2/3}\leq\frac{3}{2}\left(g^{2}\right)^{2/3}.
\]
Now assume the lemma holds for $k-1$ then using the inductive hypothesis
\begin{align*}
\sum_{t=0}^{k}\frac{\lambda_{t}^{2}g_{t}^{2}}{\left(\lambda_{t}G^{2}+\sum_{i=0}^{t-1}\lambda_{i}g_{i}^{2}\right)^{1/3}} & \leq\frac{\lambda_{k}^{2}g_{k}^{2}}{\left(\lambda_{t}G^{2}+\sum_{i=0}^{k-1}\lambda_{i}g_{i}^{2}\right)^{1/3}}+\frac{3}{2}\lambda_{k-1}\left(\sum_{i=0}^{k-1}\lambda_{i}g_{i}^{2}\right)^{2/3},\\
 & \leq\frac{\lambda_{k}^{2}g_{k}^{2}}{\left(\lambda_{t}G^{2}+\sum_{i=0}^{k-1}\lambda_{i}g_{i}^{2}\right)^{1/3}}+\frac{3}{2}\lambda_{k}\left(\sum_{i=0}^{k-1}\lambda_{i}g_{i}^{2}\right)^{2/3}.
\end{align*}
Define $b_{k}=\sum_{i=0}^{k}\lambda_{i}g_{i}^{2}$ and $a_{k}=g_{k}^{2}$
then we have:
\[
\sum_{t=0}^{k}\frac{\lambda_{t}^{2}g_{t}^{2}}{\left(\lambda_{t}G^{2}+\sum_{i=0}^{t-1}\lambda_{i}g_{i}^{2}\right)^{1/3}}\leq\lambda_{k}^{2}a_{k}\left(\lambda_{k}G^{2}+b_{k}-\lambda_{k}a_{k}\right)^{-1/3}+\frac{3}{2}\lambda_{k}\left(b_{k}-\lambda_{k}a_{k}\right)^{2/3}.
\]
We have two terms on the right to consider. For the first term, note
that since $a_{k}\leq G^{2}$, 
\[
\lambda_{k}^{2}a_{k}\left(\lambda_{k}G^{2}+b_{k}-\lambda_{k}a_{k}\right)^{-1/3}\leq\lambda_{k}^{2}a_{k}\left(b_{k}\right)^{-1/3}.
\]
For the 2nd term, we can use concavity to get:
\[
\frac{3}{2}\lambda_{k}\left(b_{k}-\lambda_{k}a_{k}\right)^{2/3}\leq\frac{3}{2}\lambda_{k}\left(b_{k}\right)^{2/3}-\lambda_{k}^{2}a_{k}\left(b_{k}\right)^{-1/3}.
\]
Combining gives:
\[
\sum_{t=0}^{k}\frac{\lambda_{t}^{2}g_{t}^{2}}{\left(\lambda_{t}G^{2}+\sum_{i=0}^{t-1}\lambda_{i}g_{i}^{2}\right)^{1/3}}\leq\frac{3}{2}\lambda_{k}\left(b_{k}\right)^{2/3},
\]
and so the inductive case is proven.
\end{proof}
\begin{lem}
\label{lem:ck_iterateweighting} Let $0<r<1$ and $j\geq0$. Then
define:
\end{lem}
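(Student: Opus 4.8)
The plan is to first settle what the displayed definition must be. Because the lemma is named for how the momentum constants $c_k$ weight the iterates, and the theorem fixes $c_k=\frac{3/2}{k+3/2}$, I read the free parameter $r$ as the generalized averaging exponent, with $c_t=\frac{1+r}{t+1+r}$ (which returns exactly $c_k=\frac{3/2}{k+3/2}$ at $r=1/2$). Unrolling the averaging step $x_{k+1}=(1-c_{k+1})x_k+c_{k+1}z_{k+1}$ shows that $z_i$ enters $x_k$ with weight $c_i\prod_{t=i+1}^{k}(1-c_t)$, so the object being defined is the tail product $\prod_{t=j+1}^{k}(1-c_t)$, and I expect the lemma to give both its closed form and the bound that makes it behave like $(j+1)^{r}(k+1)^{-(1+r)}$ up to absolute constants.

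First I would substitute $1-c_t=\frac{t}{t+1+r}$, so the product telescopes into a ratio of Gamma functions, $\prod_{t=j+1}^{k}(1-c_t)=\frac{\Gamma(k+1)\,\Gamma(j+2+r)}{\Gamma(j+1)\,\Gamma(k+2+r)}$. I would verify this identity by induction on $k$: the base case $k=j$ is the empty product equal to $1$, and the inductive step multiplies by $1-c_{k+1}=\frac{k+1}{k+2+r}$, which advances both Gamma arguments by exactly one. With the exact form in hand, a second short telescoping argument gives $\sum_{i=0}^{k} c_i\prod_{t=i+1}^{k}(1-c_t)=1$, confirming that $x_k$ is a genuine convex combination whose weights are proportional to $\Gamma(i+1+r)/\Gamma(i+1)\approx(i+1)^{r}$; this is precisely what ties the averaged iterate back to the $\lambda_i=(i+1)^{r}\gamma$ sequence used throughout the rest of the analysis.

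The main obstacle I anticipate is not the telescoping identity but converting the Gamma ratio into the clean polynomial bound needed downstream, uniformly in both $j$ and $k$. For this I would peel off one integer factor, writing $\frac{\Gamma(j+2+r)}{\Gamma(j+1)}=(j+1+r)\frac{\Gamma(j+1+r)}{\Gamma(j+1)}$, and then invoke log-convexity of $\Gamma$ (equivalently a Wendel-type inequality) to sandwich $\frac{\Gamma(j+1+r)}{\Gamma(j+1)}$ between constant multiples of $(j+1)^{r}$, and similarly $\frac{\Gamma(k+1)}{\Gamma(k+2+r)}$ between constant multiples of $(k+1)^{-(1+r)}$. The delicate points are keeping the constants from degrading as $r\to0$ or $r\to1$ and treating the boundary $j=0$ separately, and the hypothesis $0<r<1$ is exactly what guarantees these ratio bounds hold with absolute constants.
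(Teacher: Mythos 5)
You reconstructed the lemma's content incorrectly, and as a result your argument, whatever its internal merits, does not establish what the paper needs. The actual statement defines $c_{k}=\frac{r+1}{k+j+r}$ (the offset $j$ sits inside $c_{k}$ itself, not as the lower limit of a product) and asserts the single-step inequality
\[
\frac{1-c_{k}}{c_{k}}(k+j)^{r}\leq\frac{1}{c_{k-1}}(k+j-1)^{r},
\]
whose sole purpose is to guarantee $\frac{1-c_{k}}{c_{k}}\lambda_{k}\leq\frac{1}{c_{k-1}}\lambda_{k-1}$ for $\lambda_{k}\propto(k+1)^{r}$, so that the per-step bound of Theorem \ref{thm:lyapunov_step} telescopes. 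The paper's proof is elementary: since $\frac{1-c_{k}}{c_{k}}=\frac{k+j-1}{r+1}$ and $\frac{1}{c_{k-1}}=\frac{k+j+r-1}{r+1}$, the claim reduces to $(k+j)^{r}\leq\frac{k+j+r-1}{k+j-1}(k+j-1)^{r}$, which is exactly the first-order concavity upper bound for $f(x)=x^{r}$ evaluated at $x=k+j$ around $y=k+j-1$.

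Your proposal instead proves facts about the tail products $\prod_{t=i+1}^{k}(1-c_{t})$: a Gamma-function closed form, the convex-combination identity, and two-sided polynomial bounds via log-convexity. These are true statements and would be a reasonable way to analyze the averaged iterate directly, but none of them yields the one-step comparison above, which is the only thing the downstream telescoping argument consumes. In particular, Wendel-type two-sided bounds with unspecified absolute constants cannot substitute for it, because the telescoping requires an exact inequality with constant $1$ at every single step, not an asymptotic equivalence up to constants. The gap is therefore not in your Gamma-function manipulations but in the mismatch of target: you would need either to prove the stated one-step inequality (a two-line computation plus concavity) or to rebuild the entire convergence proof around your product representation, which the paper does not do.
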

\[
c_{k}=\frac{r+1}{k+j+r},
\]
for all $k\geq0$ it then holds that:
\[
\frac{1-c_{k}}{c_{k}}(k+j)^{r}\leq\frac{1}{c_{k-1}}(k+j-1)^{r}.
\]
\begin{proof}
We start by simplifying:
\begin{align*}
\frac{1-c_{k}}{c_{k}}(k+j)^{r} & =\frac{1-\frac{r+1}{k+j+r}}{\frac{r+1}{k+j+r}}(k+j)^{r},\\
 & =\frac{k+j-1}{r+1}(k+j)^{r},\\
 & =\frac{k+j+r-1}{r+1}\frac{k+j-1}{k+j+r-1}(k+j)^{r},\\
 & =\frac{1}{c_{k}}\frac{k+j-1}{k+j+r-1}(k+j)^{r}.
\end{align*}
So we need:
\[
(k+j)^{r}\leq\frac{k+j+r-1}{k+j-1}\left(k+j-1\right)^{r}.
\]
Recall the concavity upper bound:
\[
f(x)\leq f(y)+\left\langle \nabla f(y),x-y\right\rangle ,
\]
using $f(x)=\left(k+j\right)^{r}$ which is concave for $r\in(0,1)$,
and $x=k+j,y=k+j-1,$ we have:
\begin{align*}
\left(k+j\right)^{r} & \leq\left(k+j-1\right)^{r}+r\left(k+j-1\right)^{r-1},\\
 & =\left(k+j-1\right)^{r}+\frac{r}{k+j-1}\left(k+j-1\right)^{r},\\
 & =\frac{k+j-1+r}{k+j-1}\left(k+j-1\right)^{r}.
\end{align*}
Which establishes the result.
\end{proof}
\begin{lem}
The dual averaging iterates obey:
\begin{equation}
z_{k}=x_{k}-\frac{1-c_{k}}{c_{k}}\left(x_{k-1}-x_{k}\right).\label{eq:x-diff}
\end{equation}
\end{lem}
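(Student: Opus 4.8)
The plan is to derive the identity by purely algebraically inverting the averaging step that defines $x_k$ in terms of $z_k$. Recall that the algorithm's final update reads $x_{k+1}=(1-c_{k+1})x_k+c_{k+1}z_{k+1}$; shifting the index down by one gives the relation $x_k=(1-c_k)x_{k-1}+c_k z_k$, valid for all $k\ge 1$. Since the claimed identity expresses $z_k$ in terms of $x_k$ and $x_{k-1}$, the natural first step is simply to solve this relation for $z_k$.

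First I would isolate the $c_k z_k$ term and divide through by $c_k$ (permissible since $c_k>0$ for the sequences under consideration, e.g. $c_k=\tfrac{3/2}{k+3/2}$), obtaining $z_k = \tfrac{1}{c_k}x_k - \tfrac{1-c_k}{c_k}x_{k-1}$. Then I would expand the right-hand side of the target expression $x_k - \tfrac{1-c_k}{c_k}(x_{k-1}-x_k)$ and collect the coefficients of $x_k$ and $x_{k-1}$. The coefficient of $x_k$ becomes $1+\tfrac{1-c_k}{c_k}=\tfrac{1}{c_k}$, while the coefficient of $x_{k-1}$ is $-\tfrac{1-c_k}{c_k}$, so the two forms coincide and the identity follows.

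The hard part will be essentially nonexistent: the result is a one-line rearrangement of the definition of the iterate averaging, with the only bookkeeping being the index shift and the requirement $c_k\neq 0$. The one subtlety worth stating explicitly is that this identity is really a \emph{restatement} of the momentum/averaging update, recast so as to isolate $z_k$. Its usefulness in the sequel is that it rewrites the smoothing step as a correction to $x_k$ proportional to the momentum-weighted increment $x_{k-1}-x_k$, which is precisely the form needed to feed into the support-function telescoping argument in combination with Lemma~\ref{lem:ck_iterateweighting}.
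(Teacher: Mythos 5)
Your proposal is correct and follows essentially the same route as the paper: shift the index in the update $x_{k+1}=(1-c_{k+1})x_k+c_{k+1}z_{k+1}$ down by one and solve for $z_k$, obtaining $z_k=\tfrac{1}{c_k}x_k-\tfrac{1-c_k}{c_k}x_{k-1}$. The only difference is cosmetic: you explicitly carry out the final coefficient-matching to recover the stated form $x_k-\tfrac{1-c_k}{c_k}(x_{k-1}-x_k)$, a step the paper leaves implicit.
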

\begin{proof}
We rearrange the $x$ update:
\[
x_{k+1}=\left(1-c_{k+1}\right)x_{k}+c_{k+1}z_{k+1}.
\]
\[
\therefore x_{k}=\left(1-c_{k}\right)x_{k-1}+c_{k}z_{k},
\]
\[
\therefore c_{k}z_{k}=x_{k}-(1-c_{k})x_{k-1},
\]
\[
\therefore z_{k}=\frac{1}{c_{k}}x_{k}-\frac{1-c_{k}}{c_{k}}x_{k-1}.
\]
\end{proof}
\begin{thm}
\label{thm:lyapunov_step}Consider the MADGRAD method. We upper bound
the quantity $V_{A_{k+1}}\left(-s_{k+1}\right)$ as follows:
For the first step $k=0$:
\[
V_{A_{1}}\left(-s_{1}\right)\leq\frac{\lambda_{0}^{2}}{2}\left\Vert \nabla f\left(x_{0},\xi_{k}\right)\right\Vert _{A_{0}^{-1}}^{2}.
\]
For subsequent steps $k\ge1$:
\begin{align*}
V_{A_{k+1}}\left(-s_{k+1}\right) & \leq V_{A_{k}}\left(-s_{k}\right)+\frac{\lambda_{k}^{2}}{2}\left\Vert \nabla f\left(x_{k},\xi_{k}\right)\right\Vert _{A_{k}^{-1}}^{2}+\lambda_{k}\left\langle \nabla f\left(x_{k},\xi_{k}\right),x_{0}-x_{*}\right\rangle \\
 & -\frac{1}{c_{k}}\lambda_{k}\left[f(x_{k},\xi_{k})-f(x_{*},\xi_{k})\right]+\frac{1-c_{k}}{c_{k}}\lambda_{k}\left[f(x_{k-1},\xi_{k})-f(x_{*},\xi_{k})\right].
\end{align*}
\end{thm}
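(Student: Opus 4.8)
The plan is to treat $V_{A_{k}}(-s_{k})$ as a Lyapunov potential and to bound its one-step increase by chaining together the three properties of the support function established above: the matrix-monotonicity (\ref{eq:V_decrease}), the gradient identity (\ref{eq:v-grad}), and the smoothness inequality (\ref{eq:v-l-smooth}). The only further ingredients are convexity of $f(\cdot,\xi_{k})$ and the double-averaging identity (\ref{eq:x-diff}); no expectations enter, since the claimed bound is a per-sample (conditional) inequality.

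First I would dispose of the change in the scaling matrix by monotonicity. Because $\lambda_{k}$ is increasing and $v_{k}$ is non-decreasing, $\alpha_{k+1}\geq\alpha_{k}$ coordinate-wise, so the same reasoning that gives (\ref{eq:V_decrease}) yields $V_{A_{k+1}}(-s_{k+1})\leq V_{A_{k}}(-s_{k+1})$ for the fixed argument $-s_{k+1}$. Then, writing $-s_{k+1}=-s_{k}-\lambda_{k}g_{k}$ with $g_{k}=\nabla f(x_{k},\xi_{k})$ and applying the smoothness bound (\ref{eq:v-l-smooth}) at base point $-s_{k}$ with increment $\delta=-\lambda_{k}g_{k}$, together with $\nabla V_{A_{k}}(-s_{k})=z_{k}-x_{0}$ from (\ref{eq:v-grad}), I obtain
\[
V_{A_{k+1}}(-s_{k+1})\leq V_{A_{k}}(-s_{k})-\lambda_{k}\langle g_{k},z_{k}-x_{0}\rangle+\tfrac{\lambda_{k}^{2}}{2}\|g_{k}\|_{A_{k}^{-1}}^{2}.
\]
This already recovers the $V_{A_{k}}(-s_{k})$ and the squared-norm terms, so all that remains is to rewrite the linear term $-\lambda_{k}\langle g_{k},z_{k}-x_{0}\rangle$ into the three function-gap pieces.

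The central manipulation is to split $z_{k}-x_{0}=(z_{k}-x_{*})+(x_{*}-x_{0})$, which immediately produces the $\lambda_{k}\langle g_{k},x_{0}-x_{*}\rangle$ term, and then to expand $z_{k}-x_{*}$ using the double-averaging identity. From the proof of (\ref{eq:x-diff}) one has $z_{k}=\tfrac{1}{c_{k}}x_{k}-\tfrac{1-c_{k}}{c_{k}}x_{k-1}$, and since the two coefficients sum to one, $z_{k}-x_{*}=\tfrac{1}{c_{k}}(x_{k}-x_{*})-\tfrac{1-c_{k}}{c_{k}}(x_{k-1}-x_{*})$. For the $x_{k}$ piece I would apply the subgradient inequality $\langle g_{k},x_{k}-x_{*}\rangle\geq f(x_{k},\xi_{k})-f(x_{*},\xi_{k})$, which, multiplied by the positive factor $\lambda_{k}/c_{k}$ (preserving the inequality direction), yields the $-\tfrac{1}{c_{k}}\lambda_{k}[f(x_{k},\xi_{k})-f(x_{*},\xi_{k})]$ term.

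The one genuinely delicate point, which I expect to be the main obstacle, is the $x_{k-1}$ piece: there $g_{k}$ is the gradient at $x_{k}$, not at $x_{k-1}$, so a single tangent-line bound is unavailable. The fix is to subtract the two convexity inequalities $f(x_{k-1},\xi_{k})\geq f(x_{k},\xi_{k})+\langle g_{k},x_{k-1}-x_{k}\rangle$ and $f(x_{*},\xi_{k})\geq f(x_{k},\xi_{k})+\langle g_{k},x_{*}-x_{k}\rangle$, which cancels both $f(x_{k},\xi_{k})$ and the anchor $x_{k}$ and leaves $\langle g_{k},x_{k-1}-x_{*}\rangle\leq f(x_{k-1},\xi_{k})-f(x_{*},\xi_{k})$; multiplying by the nonnegative factor $\tfrac{1-c_{k}}{c_{k}}\lambda_{k}$ produces the final term, and collecting everything reproduces the stated bound for $k\geq1$. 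For the base case $k=0$, I would note that $s_{0}=0$ forces $z_{0}=x_{0}$, so $V_{A_{0}}(-s_{0})=V_{A_{0}}(0)=0$ and the linear term $-\lambda_{0}\langle g_{0},z_{0}-x_{0}\rangle$ vanishes identically, leaving only $\tfrac{\lambda_{0}^{2}}{2}\|\nabla f(x_{0},\xi_{0})\|_{A_{0}^{-1}}^{2}$; the convexity/double-averaging step is then unnecessary, since there is no $x_{-1}$ to handle.
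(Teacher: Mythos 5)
Your proposal is correct and follows essentially the same route as the paper: monotonicity of $V$ in the scaling matrix, the smoothness bound (\ref{eq:v-l-smooth}) with $\delta=-\lambda_{k}g_{k}$, the gradient identity (\ref{eq:v-grad}), the double-averaging identity for $z_{k}$, and the two tangent-line inequalities at $x_{k}$ (evaluated at $x_{*}$ and $x_{k-1}$), with the same base-case argument via $z_{0}=x_{0}$. The only difference is cosmetic ordering --- you regroup around $x_{*}$ before invoking convexity, whereas the paper applies convexity at $x_{k}$ first and collects the function-value terms afterwards --- and the two are algebraically identical.
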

\begin{proof}
Base case:
\begin{align}
V_{A_{1}}\left(-s_{1}\right) & \leq-\lambda_{0}\left\langle \nabla f\left(x_{0},\xi_{k}\right),\nabla V_{0}\left(-s_{0}\right)\right\rangle +\frac{\lambda_{0}^{2}}{2}\left\Vert \nabla f\left(x_{0},\xi_{k}\right)\right\Vert _{A_{0}^{-1}}^{2}\quad\text{(Eq. \ref{eq:v-l-smooth})}\nonumber, \\
 & =\lambda_{k}\left\langle \nabla f\left(x_{k},\xi_{k}\right),x_{0}-x_{0}\right\rangle +\frac{\lambda_{0}^{2}}{2\beta_{0}}\left\Vert \nabla f\left(x_{0},\xi_{k}\right)\right\Vert _{A_{0}^{-1}}^{2},\quad\text{(Eq. \ref{eq:v-grad})}\nonumber \\
 & =\frac{\lambda_{0}^{2}}{2}\left\Vert \nabla f\left(x_{0},\xi_{k}\right)\right\Vert _{A_{0}^{-1}}^{2}.\label{eq:base-case}
\end{align}
Inductive case:
\begin{align*}
V_{A_{k+1}}\left(-s_{k+1}\right) & \leq V_{A_{k}}\left(-s_{k+1}\right)\\
 & \leq V_{A_{k}}\left(-s_{k}\right)-\lambda_{k}\left\langle \nabla f\left(x_{k},\xi_{k}\right),\nabla V_{A_{k}}\left(-s_{k}\right)\right\rangle +\frac{\lambda_{k}^{2}}{2}\left\Vert \nabla f\left(x_{k},\xi_{k}\right)\right\Vert _{A_{k}^{-1}}^{2},\quad\text{(Eq. \ref{eq:v-l-smooth})}\\
 & =V_{A_{k}}\left(-s_{k}\right)+\lambda_{k}\left\langle \nabla f\left(x_{k},\xi_{k}\right),x_{0}-z_{k}\right\rangle +\frac{\lambda_{k}^{2}}{2}\left\Vert \nabla f\left(x_{k},\xi_{k}\right)\right\Vert _{A_{k}^{-1}}^{2},\quad\text{(Eq. \ref{eq:v-grad})}\\
 & =V_{A_{k}}\left(-s_{k}\right)+\frac{\lambda_{k}^{2}}{2}\left\Vert \nabla f\left(x_{k},\xi_{k}\right)\right\Vert _{A_{k}^{-1}}^{2}\\
 & +\lambda_{k}\left\langle \nabla f\left(x_{k},\xi_{k}\right),x_{0}-x_{k}+\left(\frac{1-c_{k}}{c_{k}}\right)\left(x_{k-1}-x_{k}\right)\right\rangle, \quad\text{(Eq. \ref{eq:x-diff})}\\
 & =V_{A_{k+1}}\left(-s_{k}\right)+\frac{\lambda_{k}^{2}}{2}\left\Vert \nabla f\left(x_{k},\xi_{k}\right)\right\Vert _{A_{k}^{-1}}^{2}\\
 & +\lambda_{i}\left\langle \nabla f\left(x_{k},\xi_{k}\right),x_{0}-x_{k}\right\rangle +\lambda_{k}\frac{1-c_{k}}{c_{k}}\left\langle \nabla f\left(x_{k},\xi_{k}\right),x_{k-1}-x_{k}\right\rangle, \\
 & =V_{A_{k+1}}\left(-s_{k}\right)+\frac{\lambda_{k}^{2}}{2}\left\Vert \nabla f\left(x_{k},\xi_{k}\right)\right\Vert _{A_{k}^{-1}}^{2}\\
 & +\lambda_{k}\left\langle \nabla f\left(x_{k},\xi_{k}\right),x_{0}-x_{*}\right\rangle +\lambda_{k}\left\langle \nabla f\left(x_{k},\xi_{k}\right),x_{*}-x_{k}\right\rangle \\
 & +\lambda_{k}\frac{1-c_{k}}{c_{k}}\left\langle \nabla f\left(x_{k},\xi_{k}\right),x_{k-1}-x_{k}\right\rangle.
\end{align*}
Now we use:
\[
\left\langle \nabla f\left(x_{k},\xi_{k}\right),x_{*}-x_{k}\right\rangle \leq f(x_{*},\xi_{k})-f(x_{k},\xi_{k}),
\]
and:
\[
\left\langle \nabla f\left(x_{k},\xi_{k}\right),x_{k-1}-x_{k}\right\rangle \leq f(x_{k-1},\xi_{k})-f(x_{k},\xi_{k}),
\]
to give:
\begin{align*}
V_{A_{k+1}}\left(-s_{k+1}\right) & \leq V_{A_{k}}\left(-s_{k}\right)+\frac{\lambda_{k}^{2}}{2}\left\Vert \nabla f\left(x_{k},\xi_{k}\right)\right\Vert _{A_{k}^{-1}}^{2}\\
 & +\lambda_{k}\left\langle \nabla f\left(x_{k},\xi_{k}\right),x_{0}-x_{*}\right\rangle \\
 & +\lambda_{k}\left[f(x_{*},\xi_{k})-f(x_{k},\xi_{k})\right]+\lambda_{k}\frac{1-c_{k}}{c_{k}}\left[f(x_{k-1},\xi_{k})-f(x_{k},\xi_{k})\right],
\end{align*}
grouping function value terms gives the result.
\end{proof}

\subsection{Convergence rate}
\begin{thm}
After $k$ steps of MADGRAD, 
\begin{align*}
\mathbb{E}\left[f(x_{k})-f(x_{*})\right] & \leq\frac{6}{k^{1/2}}\left\Vert x_{0}-x_{*}\right\Vert GD^{1/2},
\end{align*}
if $c_{k}=\frac{3/2}{k+3/2}$ and
\end{thm}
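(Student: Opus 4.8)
The plan is to sum the per-step Lyapunov bound of Theorem~\ref{thm:lyapunov_step} into a telescoping inequality for the support function $V_{A_{k+1}}(-s_{k+1})$, and then to lower-bound its left-hand side by evaluating the maximand in \eqref{eq:value_function} at $x_*$. Summing the inductive inequality over $k=1,\dots,K$ and appending the base case yields $V_{A_{K+1}}(-s_{K+1}) \le \frac{\lambda_0^2}{2}\|g_0\|_{A_0^{-1}}^2 + \sum_{k=1}^K\big[\frac{\lambda_k^2}{2}\|g_k\|_{A_k^{-1}}^2 + \lambda_k\langle g_k, x_0-x_*\rangle + \text{(function terms)}\big]$. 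The crucial observation is that the accumulated linear terms satisfy $\sum_{k=0}^K\lambda_k\langle g_k, x_0-x_*\rangle = \langle s_{K+1}, x_0-x_*\rangle$, which is exactly what appears in the lower bound $V_{A_{K+1}}(-s_{K+1}) \ge \langle s_{K+1}, x_0-x_*\rangle - \frac12\|x_*-x_0\|_{A_{K+1}}^2$. Cancelling these \emph{pathwise} leaves only the single boundary term $\lambda_0\langle g_0, x_0-x_*\rangle$ on the left, reducing everything to bounding a gradient-energy sum and one weighted distance term.

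Next I would take expectations. Since $x_k$ is measurable with respect to $\xi_0,\dots,\xi_{k-1}$ while $\xi_k$ is drawn fresh, the tower property turns each stochastic value $f(x_k,\xi_k)$ and $f(x_{k-1},\xi_k)$ into true objective values, so the function terms become $\sum_{k=1}^K\big[-\frac{\lambda_k}{c_k}\delta_k + \frac{1-c_k}{c_k}\lambda_k\delta_{k-1}\big]$ with $\delta_k := \mathbb{E}[f(x_k)-f(x_*)]\ge 0$. The net coefficient of each interior $\delta_k$ is $-\frac{\lambda_k}{c_k} + \frac{1-c_{k+1}}{c_{k+1}}\lambda_{k+1}$, and here Lemma~\ref{lem:ck_iterateweighting} is applied with $r=1/2$ and $j=1$ (so that $(k+j)^{1/2}$ and $(k+1+j)^{1/2}$ align with $\lambda_k\propto\sqrt{k+1}$ and $\lambda_{k+1}\propto\sqrt{k+2}$) to show this coefficient is nonpositive; since $\delta_k\ge0$ these terms are discarded. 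The residual $\delta_0$ coefficient $\frac{1-c_1}{c_1}\lambda_1$ is controlled by the same lemma ($\le \lambda_0/c_0 = \lambda_0$, using $c_0=1$) and is absorbed by the $-\lambda_0\langle g_0, x_0-x_*\rangle \le -\lambda_0\delta_0$ produced by convexity on the leftover boundary term. What survives is $\frac{\lambda_K}{c_K}\delta_K \le \sum_{k=0}^K\frac{\lambda_k^2}{2}\mathbb{E}\|g_k\|_{A_k^{-1}}^2 + \frac12\mathbb{E}\|x_*-x_0\|_{A_{K+1}}^2$.

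Finally I would bound the two surviving terms. Writing $\|g_k\|_{A_k^{-1}}^2 = \sum_d g_{kd}^2/(\lambda_k G^2 + \sum_{i=0}^{k-1}\lambda_i g_{id}^2)^{1/3}$, the gradient-energy sum is controlled coordinate-wise by Lemma~\ref{lem:error_sum_bound}, giving $\le \frac34\lambda_K\sum_d(\sum_{i=0}^K\lambda_i g_{id}^2)^{2/3}$; the weighted distance term is handled by bounding $\alpha_{K+1,d} = (\lambda_{K+1}G^2 + \sum_{i=0}^K\lambda_i g_{id}^2)^{1/3}$. Substituting $\lambda_i = \gamma\sqrt{i+1}$, using $|g_{id}|\le G$ together with $\sum_{i=0}^K\sqrt{i+1} = O((K+1)^{3/2})$, and $\frac{\lambda_K}{c_K} = \frac23\gamma(K+3/2)\sqrt{K+1}$, I obtain a bound of the form $\delta_K \lesssim DG^{4/3}\gamma^{2/3} + G^{2/3}\gamma^{-2/3}K^{-1}\|x_0-x_*\|_2^2$. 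The stated step size $\gamma = K^{-3/4}D^{-3/4}G^{-1/2}\|x_0-x_*\|_2^{3/2}$ balances the two terms and yields the claimed $\frac{6}{K^{1/2}}\|x_0-x_*\|_2 G D^{1/2}$ after tracking constants.

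The main obstacle is the bookkeeping in the second step: collapsing the function-value telescoping requires the coefficient inequality of Lemma~\ref{lem:ck_iterateweighting} with exactly the exponent $r=1/2$ and offset matching $\lambda_k\propto\sqrt{k+1}$, and it requires carefully pairing the leftover $k=0$ boundary terms (the surviving $\lambda_0\langle g_0,x_0-x_*\rangle$ and the $\delta_0$ coefficient) so that their sum is nonpositive rather than an uncontrolled additive error. The cancellation of the linear terms through the support-function lower bound at $x_*$ is the other delicate step, since it must be carried out pathwise before taking expectations, as the per-step linear terms do not vanish in expectation on their own.
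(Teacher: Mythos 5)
Your proposal follows essentially the same route as the paper's proof: telescope the per-step bound of Theorem~\ref{thm:lyapunov_step}, cancel the accumulated linear terms against the support-function lower bound $V_{A_{k+1}}(-s_{k+1})\ge\langle -s_{k+1},x_*-x_0\rangle-\frac12\Vert x_*-x_0\Vert_{A_{k+1}}^2$, collapse the function-value terms via Lemma~\ref{lem:ck_iterateweighting} (with exactly the $r=1/2$, $j=1$ instantiation), control the gradient-energy sum coordinate-wise with Lemma~\ref{lem:error_sum_bound}, and optimize over $\gamma$. The only difference is that you track the $k=0$ boundary terms (the leftover $\lambda_0\langle g_0,x_0-x_*\rangle$ and the $\delta_0$ coefficient) more explicitly than the paper does, which is a correct refinement rather than a departure.
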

\[
\gamma=\frac{1}{k^{3/4}D^{3/4}G^{1/2}}\left\Vert x_{0}-x_{*}\right\Vert ^{3/2}.
\]
We assume that $\gamma_{k}=\gamma$ is a constant. First note that
for our choice of $\lambda_{k}=\gamma\left(k+1\right)^{1/2}$ and:
\[
c_{k}=\frac{3/2}{k+3/2},
\]
applying Lemma \ref{lem:ck_iterateweighting} gives that: 
\[
\frac{1-c_{k}}{c_{k}}\lambda_{k}\leq\frac{1}{c_{k-1}}\lambda_{k-1}.
\]
Using this bound we can telescope the bound from Theorem \ref{thm:lyapunov_step}
after taking expectations:
\begin{align*}
\frac{1}{c_{k}}\lambda_{k}\left[f(x_{k},\xi_{k})-f(x_{*},\xi_{k})\right] & \leq-\mathbb{E}\left[V_{A_{k+1}}\left(-s_{k+1}\right)\right]+\frac{1}{2}\mathbb{E}\left[\sum_{t=0}^{k}\lambda_{t}^{2}\left\Vert \nabla f\left(x_{t},\xi_{t}\right)\right\Vert _{A_{t}^{-1}}^{2}\right]\\
 & +\mathbb{E}\left\langle \sum_{i=0}^{k}\lambda_{i}\nabla f\left(x_{i},\xi_{i}\right),x_{0}-x_{*}\right\rangle. 
\end{align*}
Now note that $s_{k+1}=\sum_{i=0}^{k}\lambda_{i}\nabla f\left(x_{i},\xi_{i}\right)$,
so: 
\begin{align*}
\mathbb{E}\left[V_{A_{k+1}}\left(-s_{k+1}\right)\right] & =\mathbb{E}\left[\max_{x}\left\{ \left\langle -s_{k+1},x-x_{0}\right\rangle -\frac{1}{2}\left\Vert x-x_{0}\right\Vert _{A_{k+1}}^{2}\right\} \right],\\
 & \geq\mathbb{E}\left[\left\langle -s_{k+1},x_{*}-x_{0}\right\rangle -\frac{1}{2}\left\Vert x_{*}-x_{0}\right\Vert _{A_{k+1}}^{2}\right],\\
 & =\mathbb{E}\left\langle \sum_{i=0}^{k}\lambda_{i}\nabla f\left(x_{i},\xi_{i}\right),x_{0}-x_{*}\right\rangle -\frac{1}{2}\left\Vert x_{*}-x_{0}\right\Vert _{A_{k+1}}^{2}.
\end{align*}
So combining this bound and further using the definition of $c_{k}$
and $\lambda_{k}$:
\begin{align*}
\frac{k+3/2}{3/2}\gamma\left(k+1\right)^{1/2}\mathbb{E}\left[f(x_{k})-f(x_{*})\right] & \leq\frac{1}{2}\mathbb{E}\left[\sum_{t=0}^{k}\lambda_{t}^{2}\left\Vert \nabla f\left(x_{t},\xi_{t}\right)\right\Vert _{A_{t}^{-1}}^{2}\right]+\frac{1}{2}\left\Vert x_{*}-x_{0}\right\Vert _{A_{k+1}}^{2}.
\end{align*}
To simplify further we need to start working in a coordinate wise
fashion. Let $D$ be the number of dimensions in $x$, then we can
write the above bound using Lemma \ref{lem:error_sum_bound} applied
coordinate wise as:
\begin{align*}
\frac{k+3/2}{3/2}\gamma\left(k+1\right)^{1/2}\mathbb{E}\left[f(x_{k})-f(x_{*})\right] & \leq\frac{1}{2}\sum_{d=0}^{D}\left(\mathbb{E}\left[\frac{3}{2}\lambda_{k}\left(\sum_{i=0}^{k}\lambda_{i}g_{id}^{2}\right)^{2/3}\right]\right)\\
 & +\frac{1}{2}\sum_{d=0}^{D}\left(x_{0x}-x_{*d}\right)^{2}\mathbb{E}\left(\lambda_{k+1}G^{2}+\sum_{i=0}^{k}\lambda_{i}g_{id}^{2}\right)^{1/3}.
\end{align*}
We now apply the bound $g_{id}\leq G$:
\begin{align*}
\frac{k+3/2}{3/2}\gamma\left(k+1\right)^{1/2}\mathbb{E}\left[f(x_{k})-f(x_{*})\right] & \leq\frac{3}{4}\sum_{d=0}^{D}\left(\lambda_{k}\left(\sum_{i=0}^{k}\lambda_{i}G^{2}\right)^{2/3}\right)\\
 & +\frac{1}{2}\sum_{d=0}^{D}\left(x_{0x}-x_{*d}\right)^{2}\left(\sum_{i=0}^{k+1}\lambda_{i}G^{2}\right)^{1/3}.
\end{align*}
Since $\lambda_{k}=\gamma\left(k+1\right)^{1/2}$, we can further
simplify using the summation property:
\[
\sum_{i=0}^{k}\left(i+1\right)^{1/2}\leq\frac{2}{3}\left(k+2\right)^{3/2},
\]

we apply on the two locations on the right to give: 
\begin{align*}
\frac{k+3/2}{3/2}\gamma\left(k+1\right)^{1/2}\mathbb{E}\left[f(x_{k})-f(x_{*})\right] & \leq\frac{1}{2}\gamma^{5/3}\sum_{d=0}^{D}\left(k+1\right)^{1/2}\left(k+2\right)G^{4/3}\\
 & +\frac{1}{3}\gamma^{1/3}\sum_{d=0}^{D}\left(x_{0x}-x_{*d}\right)^{2}\left(k+3\right)^{1/2}G^{2/3}.
\end{align*}
Note that: 
\begin{align*}
\frac{\left(k+3\right)^{1/2}}{(k+3/2)(k+1)} & \leq\frac{\left(k+3/2\right)^{1/2}+\left(3/2\right)^{1/2}}{(k+3/2)(k+1)}\\
 & \leq\frac{1}{k+1}+\frac{1}{(k+1)}\,\\
 & \leq\frac{2}{k+1}\,
\end{align*}
and likewise:
\[
\frac{k+2}{k+3/2}\leq2
\]
so after rearranging: 
\begin{align*}
\frac{2}{3}\mathbb{E}\left[f(x_{k})-f(x_{*})\right] & \leq2\gamma^{2/3}G^{4/3}D\\
 & +\gamma^{-2/3}G^{2/3}\frac{2}{k+1}\sum_{d=0}^{D}\left(x_{0x}-x_{*d}\right)^{2},
\end{align*}
\[
\mathbb{E}\left[f(x_{k})-f(x_{*})\right]\leq3\gamma^{2/3}G^{4/3}D+\frac{3}{k+1}\gamma^{-2/3}G^{2/3}\left\Vert x_{0}-x_{*}\right\Vert ^{2}.
\]
Taking the gradient with respect to $\gamma$ to zero gives 
\[
0=\frac{2}{3}\gamma^{-1/3}G^{4/3}D-\frac{2}{3(k+1)}\gamma^{-5/3}G^{2/3}\left\Vert x_{0}-x_{*}\right\Vert ^{2},
\]
\[
\therefore\gamma^{-1}G^{4}D^{3}=\frac{1}{\left(k+1\right)^{3}}\gamma^{-5}G^{2}\left\Vert x_{0}-x_{*}\right\Vert ^{6},
\]
\[
\therefore\gamma^{4}=\frac{1}{\left(k+1\right)^{3}D^{3}G^{2}}\left\Vert x_{0}-x_{*}\right\Vert ^{6},
\]
\[
\therefore\gamma=\frac{1}{\left(k+1\right)^{3/4}D^{3/4}G^{1/2}}\left\Vert x_{0}-x_{*}\right\Vert ^{3/2}.
\]
Using this optimal $\gamma$ gives: 
\[
\gamma^{2/3}=\frac{1}{k^{1/2}D^{1/2}G^{1/3}}\left\Vert x_{0}-x_{*}\right\Vert .
\]
and so: 
\begin{align*}
\mathbb{E}\left[f(x_{k})-f(x_{*})\right] & \leq\frac{6}{k^{1/2}}\left\Vert x_{0}-x_{*}\right\Vert GD^{1/2}.
\end{align*}
Note that $\left\Vert g\right\Vert _{2}\leq D^{1/2}\left\Vert g\right\Vert _{\infty}=D^{1/2}G$,
so the dependence on dimensionality here is comparable to standard
stochastic method proofs which have $\left\Vert g\right\Vert _{2}$
on the right instead.

\subsection{Time varying case}


Consider the situation where the bound on the gradient potentially
varies over time. 
\[
\left\Vert \nabla f(x_{i},\xi)\right\Vert _{\infty}\leq G_{i}\;\text{for all }x,\xi.
\]
Then using the same argument as in the previous section we arrive
at: 
\begin{align*}
\mathbb{E}\left[f(x_{k})-f(x_{*})\right] & \leq3\gamma^{2/3}\frac{1}{\left(k+1\right)}D\left(\sum_{i=0}^{k+1}\left(i+1\right)^{1/2}G_{i}^{2}\right)^{2/3}\\
 & +3\gamma^{-2/3}\frac{1}{\left(k+1\right)^{3/2}}\left\Vert x_{0}-x_{*}\right\Vert _{2}^{2}\left(\sum_{i=0}^{k+1}\left(i+1\right)^{1/2}G_{i}^{2}\right)^{1/3}.
\end{align*}
We may solve for the optimal step size, giving: 
\[
\gamma^{4/3}=\frac{1}{\left(k+1\right)^{1/2}}\frac{\left\Vert x_{0}-x_{*}\right\Vert _{2}^{2}\left(\sum_{i=0}^{k+1}\left(i+1\right)^{1/2}G_{i}^{2}\right)^{1/3}}{D\left(\sum_{i=0}^{k+1}\left(i+1\right)^{1/2}G_{i}^{2}\right)^{2/3}},
\]
\[
\therefore\gamma^{4/3}=\frac{1}{\left(k+1\right)^{1/2}}\frac{\left\Vert x_{0}-x_{*}\right\Vert _{2}^{2}}{D\left(\sum_{i=0}^{k+1}\left(i+1\right)^{1/2}G_{i}^{2}\right)^{1/3}},
\]
\[
\therefore\gamma^{2/3}=\frac{1}{\left(k+1\right)^{1/4}}\frac{\left\Vert x_{0}-x_{*}\right\Vert _{2}}{D^{1/2}\left(\sum_{i=0}^{k+1}\left(i+1\right)^{1/2}G_{i}^{2}\right)^{1/6}}.
\]
Then substituting this in gives: 
\begin{align*}
\mathbb{E}\left[f(x_{k})-f(x_{*})\right] & \leq6\frac{1}{\left(k+1\right)^{5/4}}D^{1/2}\left\Vert x_{0}-x_{*}\right\Vert _{2}\left(\sum_{i=0}^{k+1}\left(i+1\right)^{1/2}G_{i}^{2}\right)^{1/2}.
\end{align*}
When applying $\lambda_{i}=\gamma$, as in AdaGrad, we instead get:
\begin{align*}
\mathbb{E}\left[f(x_{k})-f(x_{*})\right] & \leq3\frac{\gamma^{1/2}}{\left(k+1\right)}D\left(\sum_{i=0}^{k+1}G_{i}^{2}\right)^{1/2}\\
 & +3\frac{1}{\left(k+1\right)\gamma^{1/2}}\left\Vert x_{0}-x_{*}\right\Vert _{2}^{2}\left(\sum_{i=0}^{k+1}G_{i}^{2}\right)^{1/2},
\end{align*}
solving for the optimal step size: 
\[
\frac{\gamma^{1/2}}{\left(k+1\right)}D\left(\sum_{i=0}^{k}G_{i}^{2}\right)^{1/2}=\frac{1}{\left(k+1\right)\gamma^{-3/2}}\left\Vert x_{0}-x_{*}\right\Vert _{2}^{2}\left(\sum_{i=0}^{k}G_{i}^{2}\right)^{1/2},
\]
\[
\therefore\gamma^{2}=\frac{\left\Vert x_{0}-x_{*}\right\Vert _{2}^{2}}{D}.
\]
So: 
\begin{align*}
\mathbb{E}\left[f(x_{k})-f(x_{*})\right] & \leq\frac{6}{\left(k+1\right)}\left\Vert x_{0}-x_{*}\right\Vert _{2}D^{1/2}\left(\sum_{i=0}^{k}G_{i}^{2}\right)^{1/2}.
\end{align*}

\section{Cube root formulation}

Consider the minimization problem parameterized by $g:k\times D$
and a single vector $s:D$, where 
\[
\min_{s}\sum_{i=0}^{k}\sum_{d=0}^{D}\frac{g_{id}^{2}}{s_{d}},\;\left\Vert s\right\Vert _{2}^{2}\leq c,\;\forall d:\,s_{d}>0
\]
In this section we show that $s_{d}\propto\sqrt[3]{\sum_{i=0}^{k}g_{id}^{2}}$
is a solution. Without loss of generality we disregard the inequality constraint
on $s_{d}$ and consider only positive solutions to the equality constrained
problem. We will apply the method of Lagrange multipliers.

Firstly we form the Lagrangian with multiplier $\mu$:
\[
L(s,\mu)=\sum_{i=0}^{k}\sum_{d=0}^{D}\frac{g_{id}^{2}}{s_{d}}+\frac{\mu}{2}\left(\sum_{d=0}^{D}s_{d}^{2}-c\right)
\]
Saddle-points of the Lagrangian can be found by equating the gradients
to zero and solving.
\[
\frac{\partial L}{\partial s_{d}}L(s,\mu)=-\frac{1}{s_{d}^{2}}\sum_{i=0}^{k}g_{id}^{2}+\mu s_{d},
\]
\[
\frac{\partial L}{\partial\mu}L(s,\mu)=\frac{1}{2}\left(\sum_{d=0}^{D}s_{d}^{2}-c\right).
\]
From the first equation:
\[
\frac{1}{s_{d}^{2}}\sum_{i=0}^{k}g_{id}^{2}=\mu s_{d},
\]
\[
\therefore s_{d}^{3}=\frac{1}{\mu}\sum_{i=0}^{k}g_{id}^{2}.
\]
Therefore $s_{d}=\mu^{-1/3}\left(\sum_{i=0}^{k}g_{id}^{2}\right)^{1/3}$
since $s_{d}$ is positive we take the positive root, The Lagrange
multiplier $\mu$ is given by the requirement that
\[
\sum_{d=0}^{D}s_{d}^{2}=c,
\]
\[
\therefore\mu^{2/3}=c^{-1}\sum_{d=0}^{D}\left(\sum_{i=0}^{k}g_{id}^{2}\right)^{2/3},
\]
\[
\therefore\mu^{1/3}=\sqrt{c^{-1}\sum_{d=0}^{D}\left(\sum_{i=0}^{k}g_{id}^{2}\right)^{2/3}}.
\]

So 
\[
s_{d}=\frac{1}{\sqrt{c^{-1}\sum_{d=0}^{D}\left(\sum_{i=0}^{k}g_{id}^{2}\right)^{2/3}}}\left(\sum_{i=0}^{k}g_{id}^{2}\right)^{1/3}.
\]

We can verify that this is an extreme point of the original problem
by noting that the linear independence constraint qualification (LICQ)
condition trivially holds when using one equality constraint. Since
the objective is convex for $s_{d}>0$, this point must be a minimizer.

\end{document}